\documentclass{article}

\usepackage{microtype}
\usepackage{graphicx}
\usepackage{subcaption}
\usepackage{booktabs} 

\usepackage{hyperref}
\usepackage{amssymb}

\usepackage[accepted]{icml2026}

\usepackage{amsmath}
\usepackage{amssymb}
\usepackage{mathtools}
\usepackage{amsthm}
\usepackage{algorithm}
\usepackage{algorithmic}
\usepackage[misc]{ifsym} 
\usepackage[capitalize,noabbrev]{cleveref}

\theoremstyle{plain}
\newtheorem{theorem}{Theorem}[section]

\newtheorem{lemma}[theorem]{Lemma}
\newtheorem{corollary}[theorem]{Corollary}
\theoremstyle{definition}

\newtheorem{assumption}[theorem]{Assumption}
\theoremstyle{remark}
\newtheorem{remark}[theorem]{Remark}

\usepackage[textsize=tiny]{todonotes}

\icmltitlerunning{A Time-Reparameterized Cumulative Intensity Extrapolation Sampler for Discrete Flow Matching}
\crefname{assumption}{Assumption}{Assumptions}
\Crefname{assumption}{Assumption}{Assumptions}
\usepackage{amsmath}

\DeclareMathOperator{\KL}{KL}
\begin{document}

\twocolumn[
\icmltitle{A Time-Reparameterized Cumulative Intensity Extrapolation Sampler \\
for Discrete Flow Matching}

\icmlsetsymbol{corresponding}{\Letter}

\begin{icmlauthorlist}
\icmlauthor{Feiyang Fu}{zju}
\icmlauthor{Hehe Fan}{zju,corresponding}
\end{icmlauthorlist}

\icmlaffiliation{zju}{College of Artificial Intelligence, Zhejiang University, Hangzhou, China}

\icmlcorrespondingauthor{Hehe Fan}{hehefan@zju.edu.cn}

\icmlkeywords{Discrete Flow Matching, Generative Models, Sampling Acceleration, Machine Learning, ICML}

\vskip 0.3in
]

\printAffiliationsAndNotice{}  

\begin{abstract}

Discrete flow matching (DFM) provides a principled framework for generative modeling on discrete state spaces via continuous-time Markov chain dynamics. In practice, sampling for DFM commonly employs discretizations such as $\tau$-leaping, yet efficient sampling methods under a limited number of function evaluations (NFE) remain less studied. To address this gap, we propose the Time-Reparameterized Cumulative Intensity Extrapolation (TR-CIE) sampler, which aims to improve sampling quality when function evaluations are restricted. TR-CIE consists of two components. First, a schedule-based time reparameterization rescales the time grid according to the noise schedule. Under standard factorized DFM rate parameterizations, this transformation of variables absorbs the schedule-dependent growth term and mitigates stiffness near the terminal sampling stage. Second, we introduce a cumulative-intensity extrapolation updating rule. By reusing cached model outputs from the previous step as a history term, this improves the approximation of stepwise cumulative intensities on the resulting non-uniform time grid. We provide a theoretical analysis that bounds the local approximation error of cumulative intensities and establishes convergence results. The resulting sampler requires one NFE per step and introduces no additional model evaluations compared to the standard $\tau$-leaping sampler. Extensive experiments on synthetic tasks, text generation, and text-to-image benchmarks demonstrate that our method improves sampling quality under limited NFE.

\end{abstract}

\section{Introduction}

Generative modeling has achieved promising progress in recent years, enabling high-quality synthesis and controllable generation across a wide range of modalities \cite{kingma2013auto,ho2020denoising,song2020score,rombach2022high,lipman2022flow,geng2025mean}. Beyond continuous signals such as images and audio, many applications involve inherently discrete representations, including natural language tokens, tokenized images obtained from vector-quantized autoencoders, symbolic sequences, and other categorical data \cite{van2017neural,esser2021taming,stark2024dirichlet,chang2022maskgit}. For these domains, it is important to develop generative methods that operate directly on discrete state spaces while maintaining compatibility with large vocabularies and high-dimensional sequences.

Discrete diffusion models (DDMs) provide an established framework for this setting by defining Markovian noising dynamics on a finite state space and learning to reverse them \cite{austin2021structured,hoogeboom2021argmax,campbell2022continuous,lou2023discrete}. In continuous-time formulations, the noising process constitutes a continuous-time Markov chain (CTMC), and sampling involves simulating the learned reverse-time CTMC \cite{campbell2022continuous}. Starting from a prior distribution (such as a uniform distribution or an absorbing state), the learned reverse-time process generates samples by progressively refining the discrete state. This framework has demonstrated strong empirical performance on token-based tasks and has become a standard baseline for discrete generative modeling \cite{austin2021structured,campbell2022continuous,lou2023discrete}.

More recently, discrete flow matching (DFM) has attracted attention as an alternative CTMC-based framework \cite{gat2024discrete,davis2024fisher,shaul2024flow}. Similarly to discrete diffusion models, DFM constructs a CTMC on a discrete state space and learns time-dependent transition rates to transport a simple prior distribution to the data distribution. Unlike discrete diffusion, which typically starts from a predefined forward noising process and derives the reverse-time dynamics, DFM directly specifies a target probability path and trains a generator consistent with that path via a flow matching objective, following the principles of continuous flow matching \cite{lipman2022flow}. This additional flexibility in the selection of probability paths and rate structures can simplify modeling decisions and broaden the design space for training and inference.

\subsection{Related Work}

\paragraph{Efficient Sampling in Continuous Diffusion and Flow Models}
Efforts to accelerate sampling in continuous diffusion and flow models mainly focus on reducing the NFE required to simulate the reverse-time process, which is commonly formulated as an ordinary differential equation (ODE) or a stochastic differential equation (SDE) \cite{song2020score,lu2022dpm,zhao2023unipc}. One major direction involves developing advanced numerical solvers, including higher-order discretizations, multistep updates, and predictor-corrector methods \cite{lu2022dpm,lu2025dpm,liu2022pseudo,zhao2023unipc}. Another direction optimizes the components of the sampling procedure~\cite{sabour2024align,xue2024accelerating,zhou2024fast}. These methods help to mitigate discretization errors in ODE or SDE solvers for continuous domains. In contrast, sampling for discrete flow matching is driven by a CTMC in a discrete state space, where the significant computational challenge lies in approximating the stepwise cumulative intensity required by $\tau$-leaping.

\paragraph{Discrete Diffusion Models}
DDMs have developed rapidly as a framework for categorical generation, with applications spanning text, tokenized images, graphs, and biological sequences \cite{austin2021structured,campbell2022continuous,hoogeboom2021argmax,kim2024discrete}. Several advances in modeling and training have improved the practicality of DDMs. SEDD \cite{lou2023discrete} proposes a score entropy loss that extends score matching to discrete spaces. RADD \cite{ou2024your} demonstrates that the concrete score in the absorbing diffusion admits an analytic factorization connecting masked diffusion with autoregressive modeling. Recent theoretical studies have also analyzed the convergence properties of DDMs and the impact of score estimation errors on the generated distribution \cite{wan2025error,su2025theoretical,ren2024discrete,zhang2024convergence,liang2025absorb}.

Recent work on discrete diffusion models in the absorbing regime has shown that their reverse dynamics admit a time-agnostic first-hitting interpretation~\cite{zheng2025masked}. From this perspective, the first hitting sampler (FHS) reformulates generation as a token-by-token decoding process by analytically sampling the transition times at which masked tokens are first revealed, rather than relying on a standard time-discretized reverse process. This provides an important alternative view of masked diffusion sampling and is especially natural in the pure absorbing setting.

Another large body of work studies faster inference for DDMs while maintaining sampling quality \cite{zhu2025di,fu2025learnable,ren2025fast,zhu2025mdns,zhao2024informed}. Approximate simulation methods are widely used to enable parallel updates across multiple dimensions. A standard example is $\tau$-leaping \cite{gillespie2001approximate}, which applies an Euler-type approximation to simulate multiple coordinates simultaneously. Tweedie $\tau$-leaping \cite{campbell2022continuous} further improves this approximation by leveraging the posterior expectation of the state to correct the transition rates. While these variants are efficient, the inherent discretization bias often necessitates many steps when intensities vary rapidly near the terminal stage. Specifically, the work of \citet{ren2025fast} is highly relevant. This work adapts higher-order numerical methods, namely $\theta$-RK2 and $\theta$-Trapezoidal methods, to discrete diffusion inference. It establishes second-order convergence in KL divergence and typically requires multiple function evaluations per step (e.g., 2 NFE) to form the higher-order corrections. In our method, we introduce a schedule-based time reparameterization to eliminate the schedule-dependent growth term in standard factorized DFM. Furthermore, we propose TR-CIE to estimate stepwise cumulative intensities via history reuse. Our approach maintains a cost of one NFE per step, making it effective for large-scale discrete flow sampling where the number of evaluations is strictly limited.

\paragraph{Discrete Flow Matching}
DFM~\cite{gat2024discrete,shaul2024flow,davis2024fisher,luo2025next,cheng2025alpha,nisonoff2024unlocking} provides a flexible paradigm for discrete generative modeling by learning CTMC dynamics that transports a simple prior to the data distribution.
On the methodology side, \citet{gat2024discrete} establish a flow matching framework based on CTMC for high-dimensional discrete generation. \citet{shaul2024flow} characterize discrete probability paths via kinetic energy minimization. \citet{billera2025branching} introduce branching flows to model variable-length sequence transitions. From an information-geometric perspective, \citet{davis2024fisher} map categorical distributions on the probability simplex to the positive hypersphere via the Fisher--Rao metric to obtain closed-form geodesic paths.
Regarding theoretical guarantees, \citet{wan2025error} and \citet{su2025theoretical} relate terminal distribution error to the approximation and estimation errors of the learned intensity field. In terms of applications and architectures \cite{qin2024defog,li2025discrete, navon2025drax,chen2025multi,campbell2024generative,havasiedit}, \citet{qin2024defog} apply DFM to graph generation, \citet{wang2025fudoki} adapt pretrained autoregressive multimodal models into a DFM framework, and \citet{ou2025discrete} propose discrete neural flow samplers using locally equivariant Transformers.

\subsection{Contributions}
To improve sampling for DFM under limited model evaluations, we propose the Time-Reparameterized Cumulative Intensity Extrapolation (TR-CIE) sampler. Our method targets the stepwise cumulative intensities and improves approximation accuracy through a combination of time reparameterization and history reuse. First, TR-CIE introduces a schedule-based time reparameterization tailored to standard factorized DFM~\cite{gat2024discrete}. This transformation removes the schedule-dependent growth term in the transition intensities and mitigates stiffness near the terminal sampling stage. Second, we introduce a two-evaluation reference estimator to analyze the discretization error of cumulative intensity under the frozen-state approximation, where we then derive a practical one-evaluation variant. This practical estimator reuses cached model outputs from the previous step to extrapolate stepwise cumulative intensities, requiring only one model evaluation per step. We provide a theoretical analysis that relates local approximation errors of the cumulative intensity to the terminal distribution divergence. 
Our method can be considered as a structured adaptation of known numerical ideas with new design choices for the standard factorized DFM setting. In particular, the time reparameterization uses the factorized rate parameterization to remove the schedule-dependent growth term, and the cumulative-intensity extrapolation is developed in the reparameterized time domain for CTMC sampling. The resulting sampler further yields a practical one-evaluation implementation through cached history reuse.
Experiments on text generation, text-to-image benchmarks, and a synthetic countdown task demonstrate that our method achieves significantly improved quality under restricted NFE. Our contributions are summarized as follows:
\begin{itemize}
    \item We propose TR-CIE, which introduces a schedule-based time reparameterization to mitigate schedule-dependent growth in standard factorized DFM and alleviate stiffness near the terminal sampling stage.
    \item We develop a two-evaluation reference estimator for theoretical analysis and a practical one-evaluation cumulative intensity extrapolation sampler that reuses cached history to reduce the computational cost.
    \item Extensive experiments on text generation, text-to-image generation, and a synthetic countdown task across various NFE demonstrate that our method significantly improves sampling quality.
\end{itemize}

\subsection{Notation}
For any positive integer $N$, let $[N] \coloneqq \{1, \dots, N\}$ denote the set of integers up to $N$. We distinguish the physical time $t \in [0, 1]$ from the reparameterized time $\tau \in [0, \tau_N]$, and denote the time derivative of the scalar interpolation schedule $\kappa_t$ by $\dot{\kappa}_t$. The discrete data resides in a state space $\mathcal{X} = \mathcal{S}^D$, where $\mathcal{S} = [V]$ represents the vocabulary of size $V$. Let $(X_t)_{t \in [0,1]}$ denote the CTMC process and use $x_n \in \mathcal{X}$ to refer to a specific state realization at time step $n$. We denote the $d$-th coordinate by $x^d$ and the vector excluding the $d$-th coordinate by $x^{\backslash d}$. Let $\mathbb{I}(\cdot)$ denote the indicator function, $\delta_x(\cdot)$ represent the Kronecker delta distribution concentrated at $x$, and $\mathrm{d}_H(\cdot,\cdot)$ measure the Hamming distance on $\mathcal{X}$.

\section{Preliminaries}
\label{sec:preliminaries}
This section reviews the CTMC formulation \cite{norris1998markov} underlying discrete flow models and introduces the sampling quantities approximated by our method. We focus on time-inhomogeneous CTMC on high-dimensional discrete spaces and specify stepwise cumulative intensities.

\subsection{CTMC on Finite Discrete Spaces}
\label{subsec:ctmc_prelim}

We consider a CTMC $(X_t)_{t\in[0,1]}$ taking values in a finite discrete state space $\mathcal X$.

\paragraph{Infinitesimal Transition Characterization.}
For $x\in\mathcal X$ and $y\neq x$, let $u_t(y,x)\ge 0$ denote the transition rate from $x$ to $y$ at time $t$. We define the total exit rate $\lambda_t(x):=\sum_{z\neq x}u_t(z,x)$ and set the diagonal entry $u_t(x,x):=-\lambda_t(x)$, ensuring that $\sum_{y\in\mathcal X}u_t(y,x)=0$ for all $x$. The CTMC is characterized by the short-time expansion as the time increment $h \to 0^+$:
\begin{equation}
\mathbb{P}(X_{t+h}=y\mid X_t=x)=\delta_x(y)+u_t(y,x)\,h+o(h).
\label{eq:infinitesimal_combined}
\end{equation}

\paragraph{Probability flow.}
Let $p_t(x)=\mathbb P(X_t=x)$ be the marginal probability of state $x$ at time $t$. Then $p_t$ satisfies the Kolmogorov forward equation:
\begin{equation}
\frac{\mathrm{d}}{\mathrm{d}t}p_t(y)=\sum_{x\in\mathcal X}u_t(y,x)\,p_t(x).
\label{eq:kolmogorov_forward}
\end{equation}

\subsection{Discrete Flow Matching}
\label{subsec:dfm_prelim}

DFM learns a time-inhomogeneous CTMC that transports a simple base distribution $p_{\mathrm{src}}$ at $t=0$ to the target data distribution $p_{\mathrm{data}}$ at $t=1$. The dynamics are parameterized by a time-dependent generator $u_t^\theta(y,x)$, which defines the induced probability flow by
\begin{equation}
\frac{\mathrm{d}}{\mathrm{d}t}p_t^\theta(y)=\sum_{x\in\mathcal X}u_t^\theta(y,x)\,p_t^\theta(x),
\qquad y\in\mathcal X,
\label{eq:dfm_kolmogorov}
\end{equation}
where $p_t^\theta$ denotes the marginal distribution induced by the parameterized generator. The generator is subject to the CTMC constraints $u_t^\theta(y,x)\ge 0$ for $y\neq x$ and $\sum_{y\in\mathcal X}u_t^\theta(y,x)=0$ for all $x$. We denote the total exit rate of the model by $\lambda_t^\theta(x) \coloneqq \sum_{y\neq x}u_t^\theta(y,x)$.

In practice, to scale to high-dimensional discrete data $x\in\mathcal S^D$, DFM adopts a sparse coordinate-update parameterization. For each coordinate $d\in[D]$ and token $s\in\mathcal S\setminus\{x^d\}$, let $u_{t}^{d,\theta}(s;x)\ge 0$ denote the rate of replacing the $d$-th coordinate $x^d$ by $s$ conditioned on the current state $x$. The resulting generator can be written as:
\begin{equation}
u_t^\theta(z,x)
=
\sum_{d=1}^{D}\sum_{s\neq x^d}
u_{t}^{d,\theta}(s;x)\;
\mathbb{I}\!\left(z^{\backslash d}=x^{\backslash d},\, z^d=s\right).
\label{eq:dfm_coord_update}
\end{equation}
DFM trains $u_t^\theta$ to match a target probability path that interpolates between $p_{\mathrm{src}}$ and $p_{\mathrm{data}}$ via a simulation-free flow matching objective.

\subsection{Stepwise Cumulative Intensities}
\label{subsec:prm_form}

To simulate the learned time-inhomogeneous CTMC, numerical solvers typically operate on a discretized time grid. Let $0=t_0 < t_1 < \cdots < t_N$ be a time partition, where $t_N<1$ is a cutoff introduced to avoid numerical singularities, as the learned transition intensities often diverge as $t \to 1$. The key quantity for numerical simulation is the stepwise cumulative intensity, which represents the total transition mass accumulated over a time interval. For a frozen state $x\in\mathcal X$ and a channel $(d,s)$ (representing a jump of the $d$-th coordinate to value $s$), we define the cumulative intensity over $[t_n, t_{n+1}]$ as follows:
\begin{equation}
\Lambda_{n,d,s}(x)
:=
\int_{t_n}^{t_{n+1}} u_t^{d,\theta}(s;x)\,\mathrm{d}t,
\qquad s\neq x^d.
\label{eq:cumint_channel}
\end{equation}
Accordingly, the coordinate-wise total cumulative intensity is given by
\begin{equation}
\Lambda_{n,d}(x)
:=
\sum_{s\neq x^d}\Lambda_{n,d,s}(x)
=
\int_{t_n}^{t_{n+1}}\lambda_{t}^{d,\theta}(x)\,\mathrm{d}t,
\label{eq:cumint_coord}
\end{equation}
where $\lambda_t^{d,\theta}(x):=\sum_{s\neq x^d}u_t^{d,\theta}(s;x)$. These integrals constitute the numerical targets approximated by various sampling algorithms. Throughout the remainder of this paper, we refer to the transition rate $u_t^{d,\theta}(s;x)$ as the intensity and its integrated value $\Lambda_{n,d,s}(x)$ as the cumulative intensity.

\subsection{Standard Sampling Algorithms}
\label{subsec:standard_sampling}

\paragraph{Poisson $\tau$-Leaping.}
Standard exact simulation methods, such as thinning \cite{lewis1979simulation}, can be computationally expensive when intensities are high or change rapidly. The $\tau$-leaping method \cite{gillespie2001approximate} offers an efficient approximation by assuming that the intensities remain constant within each time step $[t_n, t_{n+1})$. Using the Euler approximation, the cumulative intensity is estimated as follows:
\begin{equation}
    \Lambda_{n,d,s}(x_n) \approx (t_{n+1}-t_n)\, u_{t_n}^{d,\theta}(s; x_n).
    \label{eq:euler_approx}
\end{equation} 
Here, $u_{t_n}^{d,\theta}(s; x_n)$ denotes the intensity evaluated at the current time step $t_n$ conditioned on the current numerical solution state $x_n$. Based on this approximation, the method draws independent Poisson event counts for each channel:
\begin{equation}
P_{n,d,s} \sim \mathrm{Poisson}\big((t_{n+1}-t_n)\, u_{t_n}^{d,\theta}(s; x_n)\big).
\label{eq:tau_leaping_poisson}
\end{equation}
These counts $P_{n,d,s}$ represent the number of jumps from $x_n^d$ to $s$ during the interval. A conflict resolution rule is typically applied to handle multiple events within a single coordinate. While efficient, the accuracy of $\tau$-leaping can degrade when intensities vary rapidly within a step, motivating the need for improved estimators of stepwise cumulative intensities.

\section{Method}
\label{sec:method}

Poisson $\tau$-leaping requires approximating the stepwise cumulative intensities that parameterize the channel-wise Poisson event counts. The standard Euler rule can incur significant discretization error under limited NFE, especially for standard DFM parameterizations where a schedule-dependent growth term induces stiffness near the terminal time $t_N$.
We address these challenges with TR-CIE, which combines (i) a schedule-based time reparameterization and (ii) a cumulative intensity extrapolation rule that reuses cached model outputs, while using one model evaluation per step. We also describe a two-evaluation reference estimator (TR-CIE-Exact) used primarily for theoretical analysis.

\paragraph{Time-Reparameterized Sampling Schedule.}

Following standard factorized DFM implementations~\cite{gat2024discrete,lipman2024flow,su2025theoretical}, the time dependence of coordinate-wise intensities is controlled by a scalar schedule $\kappa_t\in[0,1)$ according to the factorization:
\begin{equation}
u_t^{d,\theta}(s;x)=\frac{\dot\kappa_t}{1-\kappa_t}\,g_t^{d,\theta}(s;x),
\qquad s\neq x^d,
\label{eq:rate_factor}
\end{equation}
where $g_t^{d,\theta}(s;x)$ represents the output of the neural network.
Even with early stopping at a cutoff $t_N < 1$ (where $\kappa_{t_N}=1-\varepsilon$), the schedule-dependent growth term $\frac{\dot\kappa_t}{1-\kappa_t}$ grows rapidly as $t \to 1$. This singularity induces severe stiffness near the terminal stage, rendering numerical integration in the physical time domain unstable.

To address this, we introduce a schedule-based time reparameterization, defining a new time variable $\tau$ as follows:
\begin{equation}
\tau(t):=-\ln(1-\kappa_t).
\label{eq:tau_def}
\end{equation}
Under this change of variables, the process remains a CTMC, but its intensities are rescaled by the Jacobian $\frac{\mathrm{d}t}{\mathrm{d}\tau}=\frac{1-\kappa_t}{\dot\kappa_t}$:
\begin{equation}
\tilde u_\tau^{d,\theta}(s;x)
:=
u_{t(\tau)}^{d,\theta}(s;x)\frac{\mathrm{d}t}{\mathrm{d}\tau}.
\label{eq:tildeu_def}
\end{equation}
Substituting \eqref{eq:rate_factor} into \eqref{eq:tildeu_def} yields:
\begin{equation}
\tilde u_\tau^{d,\theta}(s;x)=g_{t(\tau)}^{d,\theta}(s;x),
\qquad s\neq x^d.
\label{eq:cancel}
\end{equation}
Thus, the singular growth term is exactly eliminated in the $\tau$ domain. Consequently, stepwise integration targets the (typically bounded) network output, effectively mitigating stiffness. The logarithmic transformation extends the terminal time to an infinite horizon, requiring a truncation at $t_N$ to define a finite computational window. Unlike standard methods where early stopping is used to avoid numerical instability from diverging intensities, here the reparameterized intensity remains bounded. Additionally, a uniform grid in $\tau$ implicitly creates an adaptive schedule in physical time $t$, which naturally allocates more steps near the terminal stage to handle rapid dynamics without manual tuning.

\paragraph{TR-CIE}
We discretize $[0,\tau_N]$ with a grid $0=\tau_0<\tau_1<\cdots<\tau_N$ and denote $h_n:=\tau_{n+1}-\tau_n$ and $r_n:=\frac{h_n}{h_{n-1}}$ for $n\ge 1$.
At step $n$, under the standard frozen-state approximation used by $\tau$-leaping, the numerical target for each channel $(d,s)$ is the stepwise cumulative intensity:
\begin{equation}
\Lambda_{n,d,s}(x_n):=\int_{\tau_n}^{\tau_{n+1}}\tilde u_\sigma^{d,\theta}(s;x_n)\,\mathrm{d}\sigma,
\qquad s\neq x_n^d.
\label{eq:Lambda_true}
\end{equation}
Euler $\tau$-leaping employs the left-endpoint rule $\Lambda_{n,d,s}(x_n)\approx h_n\,\tilde u_{\tau_n}^{d,\theta}(s;x_n)$.
Motivated by the traditional Adams-Bashforth methods in numerical analysis \cite{butcher2016numerical}, TR-CIE instead approximates \eqref{eq:Lambda_true} by integrating a linear extrapolation of $\tilde u_\tau$ constructed from the current model output and a cached output from the previous step, with coefficients adapted to the non-uniform time grid via $r_n$.
Specifically, we define
\[
\tilde u_{n,d,s}:=\tilde u_{\tau_n}^{d,\theta}(s;x_n),
\qquad
\tilde u_{n-1,d,s}:=\tilde u_{\tau_{n-1}}^{d,\theta}(s;x_{n-1}),
\]
where $\tilde u_{n-1,d,s}$ is cached from the previous step. TR-CIE reuses its cached output from channel $(\tau_{n-1},x_{n-1})$ as a history term
and computes
\begin{equation}
\widehat\Lambda^{}_{n,d,s}
=
h_n\left[\left(1+\frac{r_n}{2}\right)\tilde u_{n,d,s}
-\frac{r_n}{2}\tilde u_{n-1,d,s}\right].
\label{eq:lite_ab2}
\end{equation}
This update requires one model evaluation at $(\tau_n,x_n)$ and reuses the cached output from $(\tau_{n-1},x_{n-1})$, introducing no additional model evaluation overhead. Since linear extrapolation does not guarantee non-negativity, we explicitly apply a clamping safeguard to bound the estimated intensity within $[\varepsilon_0, M]$:
\begin{equation}
\widehat\Lambda_{n,d,s} \leftarrow h_n \cdot \min\!\left\{M, \max\!\left\{\varepsilon_0, \frac{\widehat\Lambda_{n,d,s}}{h_n}\right\}\right\},
\label{eq:clamping}
\end{equation}
where $\varepsilon_0$ is a small constant to ensure positivity, and $M$ is a large upper bound to maintain numerical stability. We use this clamped value as the cumulative intensity for Poisson sampling and provide the sampling process in Algorithm~\ref{alg:trcie}.
For initialization, we set $r_0=0$ and $U_{\mathrm{prev}}=0$, which reduces the first step ($n=0$) to the Euler rule $\widehat\Lambda_{0,d,s}=h_0\,\tilde u_{\tau_0}^{d,\theta}(s;x_0)$.

\paragraph{Poisson $\tau$-Leaping Update.}
With the estimated cumulative intensities $\widehat\Lambda_{n,d,s}$, we proceed with the standard Poisson $\tau$-leaping update. For each dimension $d \in [D]$, we draw independent event counts $P_{n,d,s} \sim \mathrm{Poisson}(\widehat\Lambda_{n,d,s})$ for all $s \neq x_n^d$. Let $K_{n,d} \coloneqq \sum_{s \neq x_n^d} P_{n,d,s}$ denote the total number of jump events in dimension $d$. The state is updated coordinate-wise as follows: if $K_{n,d} = 1$, we set $x_{n+1}^d = s^*$ where $s^*$ is the unique token satisfying $P_{n,d,s^*} = 1$; otherwise (i.e., if $K_{n,d} = 0$ or $K_{n,d} > 1$), the state remains unchanged with $x_{n+1}^d = x_n^d$.

\paragraph{TR-CIE-Exact}
To isolate the discretization error derived purely from temporal integration, we introduce a reference estimator, TR-CIE-Exact. Unlike the practical variant, this estimator computes the history term by evaluating the network at the \textit{current} frozen state $x_n$ but at the \textit{previous} time $\tau_{n-1}$:
\begin{equation}
\tilde u_{n-1,d,s}^{(\mathrm{ex})}=\tilde u_{\tau_{n-1}}^{d,\theta}(s;x_n).
\label{eq:exact_points}
\end{equation}
It utilizes the same variable-step extrapolation coefficients:
\begin{equation}
\widehat\Lambda^{\mathrm{(ex)}}_{n,d,s}
=
h_n\left[\left(1+\frac{r_n}{2}\right)\tilde u_{n,d,s}
-\frac{r_n}{2}\tilde u_{n-1,d,s}^{(\mathrm{ex})}\right].
\label{eq:exact_ab2}
\end{equation}
By fixing the current state, this formulation strictly adheres to the Adams-Bashforth principle, providing a second-order approximation of the cumulative intensity integral $\Lambda_{n,d,s}(x_n)$ with respect to $\tau_n$. Although this approach effectively doubles the computational cost by requiring an additional evaluation at $(\tau_{n-1},x_n)$, it serves as a precise baseline to quantify the impact of the state drift introduced by reusing cached history in the practical TR-CIE sampler.

\begin{algorithm}[t]
\caption{TR-CIE sampling in the $\tau$ domain}
\label{alg:trcie}
\begin{algorithmic}[1]
\REQUIRE Estimated intensity $\tilde u_{\tau}^{d,\theta}(s;x)$, time grid $\{\tau_n\}_{n=0}^{N}$, source $p_{\mathrm{src}}$, floor $\varepsilon_0>0$ and optional $M$
\ENSURE Sample $x_N$
\STATE $h_n\leftarrow \tau_{n+1}-\tau_n$, $r_0\leftarrow 0$, $r_n\leftarrow \frac{h_n}{h_{n-1}}$ for $n\ge 1$
\STATE Sample $x_0\sim p_{\mathrm{src}}$, set $U_{\mathrm{prev}}\leftarrow 0$
\FOR{$n=0$ to $N-1$}
    \STATE $U_{\mathrm{cur},d,s}\leftarrow \tilde u_{\tau_n}^{d,\theta}(s;x_n)$ for all $d$ and $s\neq x_n^d$
    \STATE $\alpha_n\leftarrow 1+\frac{r_n}{2}$, $\beta_n\leftarrow \frac{r_n}{2}$
    \STATE $\widehat\Lambda_{n,d,s}\leftarrow h_n\big(\alpha_n U_{\mathrm{cur},d,s}-\beta_n U_{\mathrm{prev},d,s}\big)$
    \STATE Apply positivity safeguard using \eqref{eq:clamping}
    \STATE Sample $x_{n+1}$ by Poisson $\tau$-leaping using $\widehat\Lambda_{n,d,s}$ 
    \STATE $U_{\mathrm{prev}}\leftarrow U_{\mathrm{cur}}$
\ENDFOR
\STATE \textbf{return} $x_N$
\end{algorithmic}
\end{algorithm}

\section{Theoretical Analysis}
\label{sec:theory_main}

This section analyzes the approximation error of TR-CIE in the reparameterized time domain, focusing on the estimation of the stepwise cumulative intensity defined in \eqref{eq:Lambda_true}. We provide convergence rates for both the theoretical two-evaluation estimator (TR-CIE-Exact) and the practical one-evaluation estimator (TR-CIE). Detailed proofs are provided in Appendix~\ref{sec:appendix_proofs}.

\paragraph{Assumptions.}
This section relies on standard regularity conditions stated in Appendix~\ref{app:assumptions}. (i) the reparameterized intensity is $C^2$ in $\tau$ with bounded derivatives (Assumption~\ref{ass:time_reg}). (ii) the target and numerical intensities are bounded within $[\varepsilon_0, M]$ to ensure integrability and stability (Assumption~\ref{ass:bounded_floor}). (iii) the intensity is Lipschitz continuous with respect to the Hamming distance on the state space (Assumption~\ref{ass:state_lip}). (iv) the time grid is quasi-uniform with bounded step ratios $r_{\min} \le r_n \le r_{\max}$ (Assumption~\ref{ass:grid}).

\subsection{Local Cumulative Intensity Error}

We first analyze TR-CIE-Exact, which computes the extrapolation using the history term evaluated at the \textit{current} frozen state $x_n$.

\begin{theorem}[]
\label{thm:exact_error}
Under Assumptions~\ref{ass:time_reg} and~\ref{ass:grid}, for every step $n\ge 1$ and channel $(d,s)$, the local cumulative intensity approximation error satisfies:
\begin{equation}
    \big|\widehat\Lambda^{\mathrm{(ex)}}_{n,d,s} - \Lambda_{n,d,s}(x_n)\big| \le C_1 \cdot h_n^3,
\end{equation}
where $C_1$ is a constant independent of $n$.
\end{theorem}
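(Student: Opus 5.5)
Fix $n\ge 1$, a channel $(d,s)$, and the frozen state $x_n$. Write $f(\sigma):=\tilde u_\sigma^{d,\theta}(s;x_n)$ for $\sigma\in[\tau_{n-1},\tau_{n+1}]$. By Assumption~\ref{ass:time_reg}, $f\in C^2$ with $\sup|f''|\le L_2$ for a constant $L_2$ independent of $n$, $x_n$, and $(d,s)$. Both $\widehat\Lambda^{\mathrm{(ex)}}_{n,d,s}$ and $\Lambda_{n,d,s}(x_n)$ are then functionals of this single scalar function. The theorem therefore reduces to a deterministic quadrature-error estimate for the variable-step two-step Adams--Bashforth rule
\[
\widehat\Lambda^{\mathrm{(ex)}}_{n,d,s}=h_n\Big[\big(1+\tfrac{r_n}{2}\big)f(\tau_n)-\tfrac{r_n}{2}f(\tau_{n-1})\Big],
\qquad
\Lambda_{n,d,s}(x_n)=\int_{\tau_n}^{\tau_{n+1}}f(\sigma)\,\mathrm d\sigma .
\]
No state dynamics enter, since both quantities use the same frozen $x_n$.

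\emph{Step 1 (exactness on linears).} Let $p$ be the linear interpolant of $f$ at $\tau_{n-1}$ and $\tau_n$, i.e.
\[
p(\sigma)=f(\tau_n)+\frac{f(\tau_n)-f(\tau_{n-1})}{h_{n-1}}(\sigma-\tau_n).
\]
Integrating over $[\tau_n,\tau_{n+1}]$ gives
\[
h_n f(\tau_n)+\frac{h_n^2}{2h_{n-1}}\big(f(\tau_n)-f(\tau_{n-1})\big)
=h_n\Big[\big(1+\tfrac{r_n}{2}\big)f(\tau_n)-\tfrac{r_n}{2}f(\tau_{n-1})\Big],
\]
since $h_n/h_{n-1}=r_n$. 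This is exactly $\widehat\Lambda^{\mathrm{(ex)}}_{n,d,s}$, so the error equals $\int_{\tau_n}^{\tau_{n+1}}(p-f)\,\mathrm d\sigma$.

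\emph{Step 2 (interpolation remainder).} The standard linear-interpolation error formula, here used outside the nodes as an extrapolation, gives for each $\sigma\in[\tau_n,\tau_{n+1}]$ some $\xi_\sigma\in[\tau_{n-1},\tau_{n+1}]$ with
\[
f(\sigma)-p(\sigma)=\tfrac12 f''(\xi_\sigma)(\sigma-\tau_{n-1})(\sigma-\tau_n).
\]
Hence
\[
|\widehat\Lambda^{\mathrm{(ex)}}_{n,d,s}-\Lambda_{n,d,s}(x_n)|\le \frac{L_2}{2}\int_{\tau_n}^{\tau_{n+1}}(\sigma-\tau_{n-1})(\sigma-\tau_n)\,\mathrm d\sigma .
\]
Substituting $\sigma=\tau_n+u$, the integral equals $\int_0^{h_n}(u+h_{n-1})u\,\mathrm du=\tfrac{h_n^3}{3}+\tfrac{h_{n-1}h_n^2}{2}$.

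If one prefers to avoid the pointwise remainder formula, an equivalent route is to Taylor-expand $f(\tau_{n-1})$ and $\int f$ around $\tau_n$ with integral remainders. The zeroth- and first-order terms cancel, as they must by Step 1, and the second-order remainders are bounded by $L_2$ times the same polynomial quantities.

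\emph{Step 3 (uniform constant).} Assumption~\ref{ass:grid} gives $r_n\ge r_{\min}>0$, so $h_{n-1}=h_n/r_n\le h_n/r_{\min}$. Therefore
\[
|\widehat\Lambda^{\mathrm{(ex)}}_{n,d,s}-\Lambda_{n,d,s}(x_n)|\le C_1 h_n^3,
\qquad C_1:=\frac{L_2}{2}\Big(\frac13+\frac{1}{2r_{\min}}\Big),
\]
which is independent of $n$, $x_n$, and $(d,s)$.

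The main point needing care is this last step. The bound is naturally expressed in $h_{n-1}h_n^2$, and converting it to $h_n^3$ requires the lower step-ratio bound; without it $C_1$ would blow up on grids whose steps shrink. A secondary check is that the $C^2$ bound of Assumption~\ref{ass:time_reg} holds uniformly on the relevant window $[\tau_{n-1},\tau_{n+1}]\subset[0,\tau_N]$, so that $L_2$ does not depend on the step. The clamping in \eqref{eq:clamping} plays no role, because the theorem concerns the unclamped estimator \eqref{eq:exact_ab2}.
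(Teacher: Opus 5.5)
Your proof is correct and lands on exactly the paper's constant, since $\frac{L_2}{2}\bigl(\frac13+\frac{1}{2r_{\min}}\bigr)=\bigl(\frac16+\frac{1}{4r_{\min}}\bigr)M_2=C_{\mathrm{grid}}M_2$. Your main route is organized differently from the paper's.

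\textbf{The paper's route.} It never invokes interpolation. It Taylor-expands both $\int_{\tau_n}^{\tau_{n+1}}f$ and the history value $f(\tau_{n-1})$ about $\tau_n$. It then checks by direct algebra, using $h_{n-1}=h_n/r_n$, that the zeroth- and first-order terms cancel. Finally it bounds the two Lagrange remainders separately, $|R_1|\le\frac{M_2}{6}h_n^3$ and $\frac{h_nr_n}{2}|R_2|\le\frac{M_2}{4r_{\min}}h_n^3$, and combines them by the triangle inequality. This is precisely the alternative you sketch at the end of Step 2.

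\textbf{Your route.} You recognize $\widehat\Lambda^{\mathrm{(ex)}}_{n,d,s}$ as the exact integral of the linear extrapolant through $\tau_{n-1},\tau_n$, and integrate a single interpolation remainder. This explains where the variable-step coefficients $1+\frac{r_n}{2}$ and $-\frac{r_n}{2}$ come from. Also, the kernel $(\sigma-\tau_{n-1})(\sigma-\tau_n)$ is nonnegative on $[\tau_n,\tau_{n+1}]$ and $f''$ is continuous. So a weighted-average argument upgrades your inequality to the exact representation $\Lambda_{n,d,s}(x_n)-\widehat\Lambda^{\mathrm{(ex)}}_{n,d,s}=\frac12 f''(\zeta)\bigl(\frac{h_n^3}{3}+\frac{h_{n-1}h_n^2}{2}\bigr)$ for some $\zeta\in[\tau_{n-1},\tau_{n+1}]$. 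This shows the bound is sharp and reduces to the classical Adams--Bashforth error constant $\frac{5}{12}$ when $r_n=1$.

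\textbf{Comparison.} The paper's Taylor route is more elementary and self-contained: it needs only Taylor's theorem, not the interpolation-error formula applied outside the node interval. Its triangle inequality also loses nothing in the worst case, because for quadratic $f$ the two remainders have the same sign. Both arguments therefore give the same sharp bound.
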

This result validates that the schedule-based extrapolation yields third-order local accuracy ($O(h^3)$) with respect to time when the history term is exact.

Then, we analyze the practical TR-CIE estimator, which reuses the cached model output from the previous step ($\tau_{n-1}, x_{n-1}$) to maintain a cost of one model evaluation per step.

\begin{theorem}[]
\label{thm:cie_error}
Under Assumptions~\ref{ass:time_reg},~\ref{ass:bounded_floor},~\ref{ass:state_lip}, and~\ref{ass:grid}, for every step $n\ge 1$ and channel $(d,s)$, the following bound holds along the sampler trajectory:
\begin{equation}
\begin{aligned}
\big|\widehat\Lambda_{n,d,s} - \Lambda_{n,d,s}(x_n)\big|
&\le C_2\,h_n^3 \\
&\quad + C_3\,h_n\,\mathrm{d}_H(x_n,x_{n-1}),
\end{aligned}
\label{eq:cie_pathwise}
\end{equation}
where $C_2, C_3$ are constants independent of $n$.
\end{theorem}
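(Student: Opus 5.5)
The plan is to split the error by inserting the TR-CIE-Exact estimator and using the triangle inequality:
\[
\big|\widehat\Lambda_{n,d,s}-\Lambda_{n,d,s}(x_n)\big|
\le \big|\widehat\Lambda^{(\mathrm{ex})}_{n,d,s}-\Lambda_{n,d,s}(x_n)\big|
+\big|\widehat\Lambda_{n,d,s}-\widehat\Lambda^{(\mathrm{ex})}_{n,d,s}\big|.
\]
Theorem~\ref{thm:exact_error} bounds the first term by $C_1 h_n^3$, so we will take $C_2=C_1$. The state drift then enters only through the second term.

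Both estimators are computed from the same current value $\tilde u_{n,d,s}$ and the same coefficients. Comparing \eqref{eq:lite_ab2} with \eqref{eq:exact_ab2} gives
\[
\widehat\Lambda_{n,d,s}-\widehat\Lambda^{(\mathrm{ex})}_{n,d,s}
=-\frac{h_n r_n}{2}\Big(\tilde u_{\tau_{n-1}}^{d,\theta}(s;x_{n-1})-\tilde u_{\tau_{n-1}}^{d,\theta}(s;x_n)\Big).
\]
This is the model output at one fixed time $\tau_{n-1}$, evaluated at two states. The Hamming-Lipschitz condition (Assumption~\ref{ass:state_lip}) bounds the bracket by $L\,\mathrm d_H(x_n,x_{n-1})$. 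Assumption~\ref{ass:grid} gives $r_n\le r_{\max}$, so the term is at most $\tfrac{r_{\max}L}{2}h_n\,\mathrm d_H(x_n,x_{n-1})$. We therefore set $C_3=r_{\max}L/2$, which does not depend on $n$.

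The main subtlety is the clamping step \eqref{eq:clamping}, since the sampler actually uses the clamped value rather than the raw extrapolation. By Assumption~\ref{ass:bounded_floor}, the true intensity takes values in $[\varepsilon_0,M]$. Hence $\Lambda_{n,d,s}(x_n)/h_n$ lies in $[\varepsilon_0,M]$, being an average of the true intensity over the step. Projection onto an interval containing the target is 1-Lipschitz and does not increase the distance to that target. So after rescaling by $h_n$, the clamped estimate is at least as close to $\Lambda_{n,d,s}(x_n)$ as the unclamped one, and the same bound holds.

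A secondary check concerns whether the Lipschitz constant applies at time $\tau_{n-1}$ and in the $\tau$ domain. By \eqref{eq:cancel}, $\tilde u$ equals the network output $g$, so Assumption~\ref{ass:state_lip} can be read directly for $\tilde u$ uniformly in $\tau$. The case where $s=x_{n-1}^d$ but $s\neq x_n^d$ (so the cached channel is undefined) also needs handling. In that case the cached value is set to zero or to a convention, and the discrepancy is bounded by $M\le M\,\mathrm d_H(x_n,x_{n-1})$, because coordinate $d$ changed. This case is absorbed by enlarging $C_3$ to $\tfrac{r_{\max}}{2}\max(L,M)$.
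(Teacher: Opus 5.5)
Your proof is correct and matches the paper's argument: the paper also writes $\widehat\Lambda_{n,d,s}=\widehat\Lambda^{(\mathrm{ex})}_{n,d,s}+\frac{r_n}{2}h_n\bigl(\tilde u_{\tau_{n-1}}^{d,\theta}(s;x_n)-\tilde u_{\tau_{n-1}}^{d,\theta}(s;x_{n-1})\bigr)$, applies the triangle inequality, bounds the first piece by the TR-CIE-Exact lemma and the second by Assumption~\ref{ass:state_lip} with $r_n\le r_{\max}$, and obtains $C_2=C_{\mathrm{grid}}M_2$ and $C_3=\frac{r_{\max}}{2}L_x$. Your extra remarks are valid refinements of points the paper leaves implicit: the clamp argument (projection onto $[\varepsilon_0,M]$ is non-expansive and $\Lambda_{n,d,s}(x_n)/h_n$ lies in that interval), and the undefined cached channel when $s=x_{n-1}^d\neq x_n^d$. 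They are also where Assumption~\ref{ass:bounded_floor}, listed in the theorem but not used in the paper's lemma, actually does work.
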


\begin{corollary}[]
\label{cor:cie_expected}
Under the conditions of \Cref{thm:cie_error}, the following bound holds:
\begin{equation}
\begin{aligned}
\mathbb{E}\Big[\big|\widehat\Lambda_{n,d,s} - \Lambda_{n,d,s}(x_n)\big|\Big]
&\le C_2\,h_n^3 \\
&\quad + C_3\,h_n\,\mathbb{E}\big[\mathrm{d}_H(x_n,x_{n-1})\big].
\end{aligned}
\label{eq:cie_expected}
\end{equation}
Moreover, if \[\mathbb{E}[\mathrm{d}_H(x_n,x_{n-1})]\le C_d\,h_{n-1},\] for some constant $C_d>0$
 (a condition validated empirically in Appendix~\ref{hamming:drift}),
then the expected drift contribution is $O(h_n h_{n-1})$.
\end{corollary}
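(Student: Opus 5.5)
The plan is to take expectations in the pathwise bound of \Cref{thm:cie_error} and then substitute the drift hypothesis. Inequality \eqref{eq:cie_pathwise} holds almost surely along every sampler trajectory, and its constants $C_2, C_3$ do not depend on $n$ or on the realization. Both sides are nonnegative random variables. Since $\mathcal X$ is finite, they are also bounded: $\mathrm{d}_H\le D$, and by Assumption~\ref{ass:bounded_floor} both $\widehat\Lambda_{n,d,s}$ (after clamping) and $\Lambda_{n,d,s}(x_n)$ lie in $[h_n\varepsilon_0, h_n M]$. All expectations are therefore finite, and monotonicity of expectation applies.

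Taking $\mathbb E$ of both sides of \eqref{eq:cie_pathwise}, and using that $h_n$ is deterministic (the grid is fixed in advance), linearity gives
\[
\mathbb E\big[|\widehat\Lambda_{n,d,s}-\Lambda_{n,d,s}(x_n)|\big]\le C_2h_n^3+C_3h_n\,\mathbb E[\mathrm{d}_H(x_n,x_{n-1})],
\]
which is \eqref{eq:cie_expected}.

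For the second claim, we substitute $\mathbb E[\mathrm{d}_H(x_n,x_{n-1})]\le C_d h_{n-1}$. This bounds the drift term by $C_3C_d\,h_nh_{n-1}$, which is $O(h_nh_{n-1})$ with a constant independent of $n$. If one wants it in terms of $h_n$ alone, Assumption~\ref{ass:grid} gives $h_{n-1}=h_n/r_n\le h_n/r_{\min}$, so the term is at most $(C_3C_d/r_{\min})h_n^2$.

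The only point needing care is that \eqref{eq:cie_pathwise} holds for every realization rather than merely in some averaged sense. This is what justifies integrating it directly, and it follows from the ``along the sampler trajectory'' formulation of \Cref{thm:cie_error}. Beyond that, the argument is routine, and I expect no real obstacle.
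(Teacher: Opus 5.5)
Your proposal is correct and matches the paper's intended argument: the paper gives no separate proof, since the corollary is the expectation of the realization-wise bound of Lemma~\ref{lem:app_lite_local} (whose $h_n$, $C_2$, $C_3$ are deterministic), followed by inserting the drift hypothesis to get $C_3C_d\,h_nh_{n-1}$. One small remark is that the lemma is proved for the unclamped extrapolation, but because $\Lambda_{n,d,s}(x_n)\in[h_n\varepsilon_0,h_nM]$, clamping projects onto an interval containing the target and can only reduce the error, so your boundedness comment does no harm (it is not needed anyway, since monotonicity of expectation holds for nonnegative variables).
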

The term $h_n \mathrm{d}_H(x_n, x_{n-1})$ in \Cref{thm:cie_error} reflects the error introduced by state changes between steps. Corollary~\ref{cor:cie_expected} highlights that TR-CIE reduces the temporal integration error (the $C_2 h_n^3$ term) by using history-based extrapolation, but introduces a state-mismatch error due to history reuse (the $C_3 h_n\,\mathbb{E}[\mathrm{d}_H]$ term).
We discuss which effect dominates in Remark~\ref{remark3}.

\subsection{Global Error Decomposition and Remarks}

To relate local errors to the terminal distribution, let $p_{\tau_N}$ denote the terminal distribution of the exact $\tau$-domain CTMC driven by the learned intensities $\tilde u_\tau^{d,\theta}$, and let $q_{\tau_N}$ denote the terminal distribution induced by the sampler. A change-of-measure argument \cite{bremaud1981point, campbell2022continuous} yields an upper bound on the discretization error, which can be categorized as follows:
\begin{equation}
    \KL(p_{\tau_N} \| q_{\tau_N})
    \le
    \mathcal{E}_{\mathrm{freeze}} + \mathcal{E}_{\mathrm{var}} + \mathcal{E}_{\mathrm{int}}.
    \label{eq:kl_decomp}
\end{equation}
Here, $\mathcal{E}_{\mathrm{int}}$ aggregates the integration errors of cumulative intensity controlled by Theorems~\ref{thm:exact_error} and~\ref{thm:cie_error}. $\mathcal{E}_{\mathrm{freeze}}$ captures the pathwise state-freezing approximation error inherent to $\tau$-leaping. $\mathcal{E}_{\mathrm{var}}$ captures the within-step time-variation error at the frozen state. Modeling error between the learned CTMC and the data distribution is orthogonal to this discretization analysis and is not considered here.

\begin{remark}[Trade-off between Efficiency and Accuracy]
    TR-CIE addresses scenarios with limited model evaluations. While the reference estimator TR-CIE-Exact improves the approximation of cumulative intensities by evaluating the history term at $(\tau_{n-1}, x_n)$, it requires an additional evaluation per step. In contrast, TR-CIE maintains one evaluation per step by reusing history cached at $(\tau_{n-1}, x_{n-1})$. Although this reuse introduces a state drift error, the method provides improved sampling quality compared to Euler $\tau$-leaping under an identical NFE budget.
\end{remark}
\begin{remark}[Positivity Safeguards]
    Linear extrapolation can produce negative intensities. Following established practices in chemical kinetics and discrete diffusion~\cite{ren2025fast,anderson2009weak,hu2011weak}, a clamping safeguard with a floor $\varepsilon_0 > 0$ is applied before sampling Poisson event counts. Appendix~\ref{subsec:imp_details} provides empirical evidence that the error induced by this safeguard remains negligible.
\end{remark}

\begin{remark}[Temporal Integration and State Drift Error]
\label{remark3}
    The error bound in \Cref{thm:cie_error} comprises a temporal integration error $C_2 h_n^3$ and a state drift error $C_3 h_n \mathrm{d}_H(x_n, x_{n-1})$. The integration term is governed by the intensity curvature in the $\tau$ domain, while the drift term arises from state mismatch during history reuse. TR-CIE reduces the integration error via history extrapolation at the cost of introducing the drift component. Empirically (Appendix~\ref{subsec:drift_analysis}), the magnitude of the temporal integration error significantly surpasses the state drift error. Consequently, the reduction in integration error dominates the total error, which leads to a net improvement in local accuracy.
\end{remark}

\section{Experiments}

\begin{table*}[htb]
\centering
\small
\setlength{\tabcolsep}{13pt}
\caption{Text generation with masked DFM backbone. Metric is GPT-2 perplexity ($\leqslant$) on 1024 samples.}
\label{tab:text_masked}
\begin{tabular}{lcccccc}
\toprule
Sampler & NFE=8 & NFE=16 & NFE=32 & NFE=64 & NFE=128 & NFE=256\\
\midrule
Euler $\tau$-leaping      & 495.25 & 270.05 & 189.87 & 177.15 & 164.49 & 151.16 \\
Tweedie $\tau$-leaping    & 486.40 & 264.50 & 185.11 & 175.20 & 159.05 & 152.90 \\
$\theta$-Trapezoidal      & 589.63 & 261.42 & 180.54 & 171.80 & 150.44 & 139.35 \\
$\theta$-RK2              & 591.30 & 259.25 & 181.92 & 168.45 & 157.59 & 146.10 \\
FHS            & 480.80 & 252.10 & 174.20 & 172.60 & 166.80 & 144.88 \\
HO-FHS (Extrap.) & 503.95 & 265.60 & 175.95 & 167.70 & 159.90 & 141.32\\
\midrule
TR-CIE                    & \textbf{252.15} & \textbf{187.10} & \textbf{162.30} & \textbf{151.25} & \textbf{138.20} & \textbf{135.92} \\
\bottomrule
\end{tabular}
\end{table*}

\begin{table*}[htb]
\centering
\setlength{\tabcolsep}{13pt}
\small
\caption{Text generation with uniform DFM backbone. Metric is GPT-2 perplexity ($\leqslant$) on 1024 samples.}
\label{tab:text_uniform}
\begin{tabular}{lcccccc}
\toprule
Sampler & NFE=8 & NFE=16 & NFE=32 & NFE=64 & NFE=128 & NFE=256 \\
\midrule
Euler $\tau$-leaping & 439.94 & 220.20 & 157.10 & 134.47 & 122.04 & 120.24 \\
Tweedie $\tau$-leaping & 435.76 & 216.21 & 151.38 & 134.86 & 121.07 & 117.53 \\
$\theta$-Trapezoidal & 557.20 & 212.87 & 154.42 & 127.62 & 115.00 & 110.28 \\
$\theta$-RK2 & 565.12 & 239.27 & 156.33 & 128.51 & 114.64 & 110.48 \\
\midrule
TR-CIE      & \textbf{209.84} & \textbf{152.09} & \textbf{129.66} & \textbf{118.66} & \textbf{110.02} & \textbf{104.44}\\
\bottomrule
\end{tabular}
\end{table*}

\label{sec:experiments}
This section evaluates the proposed sampler, implemented in the reparameterized time domain. Experiments cover (i) text generation with a DFM backbone~\cite{gat2024discrete}, (ii) text-to-image generation with the FUDOKI backbone~\cite{wang2025fudoki} and (iii) a synthetic countdown task~\cite{zhao2024informed}. The sampling budget is quantified by the NFE and experiments are conducted on an NVIDIA A800 GPU.

\subsection{Text Generation}
\label{subsec:exp_text}

We benchmark inference algorithms on the DFM backbone trained with cross-entropy loss, reporting results for both masked and uniform formulations under a linear scheduler. To ensure a rigorous comparison, all methods are matched by the total NFE. Specifically, for single-stage methods (Euler/Tweedie $\tau$-leaping, TR-CIE), the number of steps equals the NFE. For high-order solvers such as $\theta$-RK2 and $\theta$-Trapezoidal \cite{ren2025fast}, the number of integration steps is halved to maintain the same total NFE. We follow~\citet{gat2024discrete} to report generative perplexity evaluated with GPT-2 on 1024 generated samples (lower is better). Perplexities measured by LLaMA-3 and the unigram entropy metric are reported in Appendix~\ref{app:a}.

We compare TR-CIE against Euler $\tau$-leaping, Tweedie $\tau$-leaping, $\theta$-RK2 and $\theta$-Trapezoidal methods. For the masked setting, we additionally compare with FHS~\cite{zheng2025masked} and its high-order variant, HO-FHS (Extrap.). Table~\ref{tab:text_masked} summarizes results for the masked formulation across a range of NFE budgets. Table~\ref{tab:text_uniform} reports the corresponding evaluation for the uniform formulation without FHS. Across varying budgets, TR-CIE improves sampling quality compared to other advanced sampling methods under matched computation. The improvement is most pronounced in low-NFE regimes, where discretization errors are amplified and the approximation of cumulative intensities becomes a major factor in sampling quality.

\begin{table}[h]
\centering
\setlength{\tabcolsep}{7pt}
\caption{Text-to-image generation with FUDOKI backbone on GenEval. Metric is accuracy (\%) over 553 prompts.}
\label{tab:fudoki_geneval}
\resizebox{\linewidth}{!}{\begin{tabular}{lcccc}
\toprule
Sampler& NFE=4 & NFE=8 & NFE=16 & NFE=32  \\
\midrule
Euler $\tau$-leaping      & 43.85 & 60.67 & 65.31 & 68.21  \\
Tweedie $\tau$-leaping    & 44.56 & 61.40 & 64.10 & 67.60  \\
$\theta$-Trapezoidal      & 52.45 & 59.03 & 65.80 & 68.71  \\
$\theta$-RK2              & \textbf{54.32} & 58.40 & 64.48 & 68.15 \\
\midrule
TR-CIE                    & 53.05 & \textbf{64.27} & \textbf{66.08} & \textbf{69.59} \\
\bottomrule
\end{tabular}}
\end{table}

\begin{table}[h]
\centering
\setlength{\tabcolsep}{7pt}
\caption{CLIP score results for text-to-image generation on 553 text-image pairs from FUDOKI.}
\label{tab:clip_score}
\resizebox{\linewidth}{!}{\begin{tabular}{lcccc}
\toprule
Sampler & NFE=4 & NFE=8 & NFE=16 & NFE=32 \\
\midrule
Euler $\tau$-leaping      & 0.311 & 0.323 & 0.325 & 0.334 \\
Tweedie $\tau$-leaping    & 0.307 & 0.323 & 0.324 & 0.325 \\
$\theta$-RK2              & 0.290 & 0.322 & 0.330 & 0.334 \\
$\theta$-Trapezoidal      & 0.284 & 0.319 & 0.326 & 0.332 \\
\midrule
TR-CIE                    & \textbf{0.321} & \textbf{0.331} & \textbf{0.334} & \textbf{0.336} \\
\bottomrule
\end{tabular}}
\end{table}

\subsection{Text-to-Image Generation}
\label{subsec:exp_t2i}

For text-to-image generation, we evaluate the method on the multimodal FUDOKI backbone~\cite{wang2025fudoki} and report the accuracy metric from the GenEval~\cite{ghosh2023geneval} benchmark, which measures text-to-image alignment accuracy (higher is better) over a set of 553 prompts in Table~\ref{tab:fudoki_geneval}. We also report CLIP score (higher is better) on the generated images from FUDOKI in Table~\ref{tab:clip_score}. All methods utilize the same pre-trained model and probability path configuration. We observe that TR-CIE yields better results across most of the NFE regime than other samplers. The text-to-image results align with the text generation findings, which demonstrate that TR-CIE improves the sampling quality when NFE is limited.

\subsection{Synthetic Countdown Task}
\label{subsec:exp_countdown}

We evaluate the TR-CIE sampler on a synthetic sequence task characterized by strong sequential dependencies, following the setup in~\cite{zhao2024informed}. The dataset consists of 256-token sequences where tokens take values in $\{0, \ldots, 31\}$ and non-zero tokens must strictly decrease by one. We train a DFM model~\cite{gat2024discrete} with a Transformer-based probability denoiser using a polynomial scheduler $\kappa_t = t^2$, where we select the cross entropy as the loss function.

Regarding the motivation for time reparameterization with this scheduler, we note that the factorization in Eq.~\eqref{eq:rate_factor} depends on the growth term $\frac{\dot{\kappa}_t}{1-\kappa_t}$. For $\kappa_t = t^2$, this term becomes $\frac{2t}{1-t^2}$. As $t \to 1$, the denominator approaches zero and causes the term to diverge. Consequently, the dynamics remain stiff near the terminal time, justifying the necessity of the TR component to smooth the integration landscape for polynomial schedules as well.

We measure performance by the error rate, defined as the proportion of generated samples that violate the countdown rule. As illustrated in Figure~\ref{fig:countdown_exp}, TR-CIE achieves lower error rates compared to other baseline samplers across various NFE, which demonstrates that our method effectively preserves the sequential structure during the discrete sampling process.

\subsection{Ablation Study}
\label{subsec:ablation}

We analyze the individual contributions of schedule-based time reparameterization (TR) and cumulative intensity extrapolation (CIE) on the text generation task. We select uniform DFM as the backbone, and the baseline sampler is Euler $\tau$-leaping. As shown in Table~\ref{tab:ablation_tr_text}, we apply TR alone to Euler $\tau$-leaping, which yields a better performance by mitigating near-terminal stiffness. CIE alone also significantly improves the generation quality over the baseline. Nonuniform $t$-grid denotes an Euler $\tau$-leaping baseline on a matched nonuniform physical time grid, obtained by mapping the same uniform $\tau$-grid used by TR-CIE back to physical time $t$ and then applying the original $t$-domain Euler update. Moreover, the full TR-CIE method achieves the lowest perplexity, which confirms that both components are complementary and essential for better performance under limited NFE.

\begin{figure}[!htb]
    \centering
    \includegraphics[width=1.0\columnwidth]{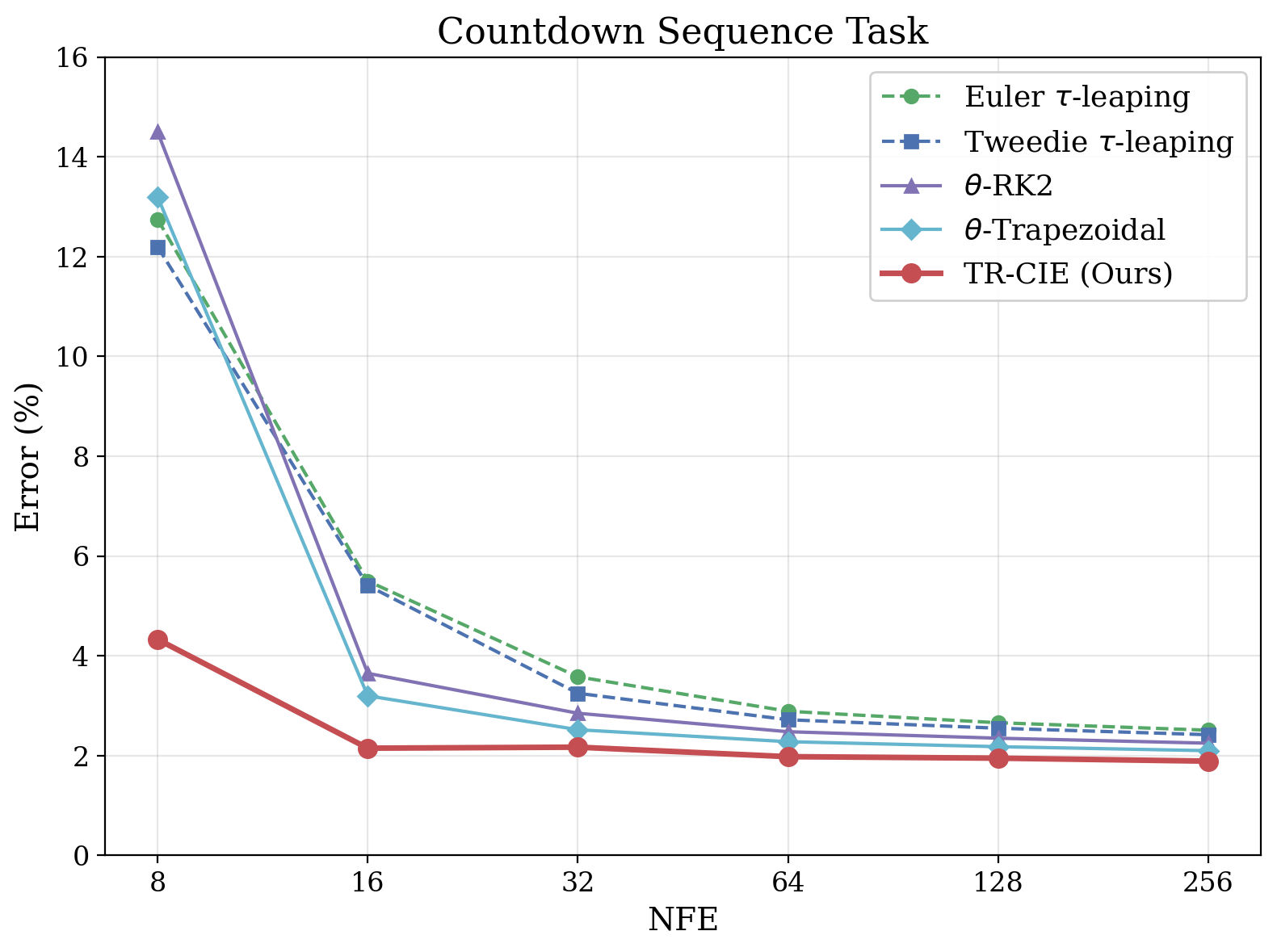} 
    \caption{\textbf{Comparison on the synthetic countdown task.} We compare our TR-CIE sampler with other samplers across various NFE. The error rate (lower is better) indicates that our method generally achieves better performance. We also observe that TR-CIE sampler significantly reduces errors especially in the low-NFE regime.}
    \label{fig:countdown_exp}
\end{figure}

\begin{table}[!h]
\centering
\renewcommand{\arraystretch}{1.2}
\setlength{\tabcolsep}{4.9pt}
\caption{Ablation on TR and CIE with uniform DFM backbone. Metric is GPT-2 perplexity ($\leqslant$) on 1024 samples.}
\label{tab:ablation_tr_text}
\resizebox{\linewidth}{!}{
\begin{tabular}{lccccc}
\toprule
Sampler & NFE=8 & NFE=16 & NFE=32 & NFE=64 & NFE=128 \\
\midrule
Euler      & 439.94 & 220.20 & 157.10 & 134.47 & 122.04  \\
TR & 376.06 & 202.91 & 152.35 & 130.88 & 117.85 \\
CIE  & 275.87 & 161.34 & 138.62 & 120.97 & 116.78 \\
Nonuniform $t$-grid & 401.32	&208.47	&149.86	&131.59	&119.45\\
\midrule
TR-CIE                    & \textbf{209.84 }& \textbf{152.09 }& \textbf{129.66 }& \textbf{118.66} & \textbf{110.02}   \\
\bottomrule
\end{tabular}}
\end{table}

\section{Conclusion}
This paper addresses the challenge of efficient sampling in DFM. Standard approximations such as $\tau$-leaping often suffer from substantial discretization error in this regime. In this work, we present the Time-Reparameterized Cumulative Intensity Extrapolation (TR-CIE) sampler to address these issues. The proposed method utilizes a schedule-based variable transformation to alleviate the stiffness associated with the noise schedule. Additionally, it incorporates an extrapolation mechanism that leverages historical model outputs to refine the estimation of stepwise cumulative intensities. This design improves approximation accuracy while strictly maintaining a cost of one model evaluation per step. Theoretical analysis establishes bounds on the local error and connects these terms to the Kullback-Leibler divergence of the terminal distribution. Empirical evaluations on text generation, text-to-image synthesis, and synthetic sequence tasks demonstrate that TR-CIE produces samples of significantly higher quality than standard baselines under equivalent computational budgets.

\section*{Acknowledgments}
This work was supported by the National Natural Science Foundation of China (92570101) and the Earth System Big Data Platform of the School of Earth Sciences, Zhejiang University.

\section*{Impact Statement}
This paper presents work whose goal is to advance the field of Machine
Learning. There are many potential societal consequences of our work, none of
which we feel must be specifically highlighted here.

\bibliography{main}
\bibliographystyle{icml2026}

\newpage
\appendix
\onecolumn

\title{Appendix}
{\centering
    {\huge \bf Appendix}
    
    {\Large \bf A Time-Reparameterized Cumulative Intensity Extrapolation Sampler \\ for Discrete Flow Matching}

}

\section{Discussions and Future Work}

TR-CIE is designed for standard factorized DFM. In this setting, commonly used schedules, such as linear, quadratic, cubic, and cosine schedules, can induce pronounced terminal stiffness, making time reparameterization particularly effective. For non-factorized parameterizations, or for schedules whose terminal growth is much milder within the sampled horizon, the same cancellation mechanism may not apply directly, and the gains from reparameterization can therefore be smaller.

Our current evidence for image generation is still mainly based on automatic metrics. Conducting larger-scale human evaluations would be an important direction for future work. In addition, it would be an interesting future direction to provide a sharp guarantee that the practical one-evaluation TR-CIE sampler uniformly improves the terminal error over Euler in realistic regimes.

\section{Additional Results}
\label{app:a}
\subsection{More results on text generation}
We provide additional results of text generation on both the masked and uniform DFM backbone, where the generative perplexity is measured by LLaMA-3 \cite{grattafiori2024llama}. The results are listed in Tables~\ref{tab:text_masked_llama3} and~\ref{tab:text_uniform_llama3}. We observe that our method generally outperforms the other samplers, and the gains are most pronounced at the low NFE regime, for which our method is mainly intended.

We also report unigram entropy on the masked DFM backbone in Table~\ref{tab:text_uniform_entropy}. We observe that our method remains broadly comparable to the other samplers, which suggests that the gain in perplexity is not simply coming from a severe reduction in sample diversity.

\begin{table*}[htb]
\centering
\setlength{\tabcolsep}{13pt}
\renewcommand{\arraystretch}{1.2}
\small
\caption{Text generation with masked DFM backbone. Metric is LLaMA-3 perplexity ($\leqslant$) on 1024 samples.}
\label{tab:text_masked_llama3}
\begin{tabular}{lcccccc}
\toprule
Sampler & NFE=8 & NFE=16 & NFE=32 & NFE=64 & NFE=128 & NFE=256 \\
\midrule
Euler $\tau$-leaping      & 438.56 & 235.45 & 168.15 & 145.29 & 136.62 & 130.25 \\
Tweedie $\tau$-leaping    & 433.12 & 231.85 & 166.50 & 146.06 & 134.25 & 128.81 \\
$\theta$-Trapezoidal      & 565.20 & 260.40 & 158.65 & 143.10 & 129.44 & 125.35 \\
$\theta$-RK2              & 579.60 & 255.00 & 162.55 & 144.80 & 131.50 & 125.30 \\
\midrule
TR-CIE                    & \textbf{231.25} & \textbf{167.60} & \textbf{140.94} & \textbf{131.60} & \textbf{120.15} & \textbf{118.70}\\
\bottomrule
\end{tabular}
\end{table*}

\begin{table*}[htb]
\centering
\setlength{\tabcolsep}{13pt}
\renewcommand{\arraystretch}{1.2}
\small
\caption{Text generation with uniform DFM backbone. Metric is LLaMA-3 perplexity ($\leqslant$) on 1024 samples.}
\label{tab:text_uniform_llama3}
\begin{tabular}{lcccccc}
\toprule
Sampler & NFE=8 & NFE=16 & NFE=32 & NFE=64 & NFE=128 & NFE=256 \\
\midrule
Euler $\tau$-leaping      & 411.00 & 210.25 & 141.15 & 122.19 & 113.62 & 105.25 \\
Tweedie $\tau$-leaping    & 410.00 & 208.55 & 142.50 & 121.06 & 110.25 & 103.81 \\
$\theta$-Trapezoidal      & 540.20 & 234.10 & 145.65 & 118.90 & 105.94 & 100.55 \\
$\theta$-RK2              & 557.00 & 226.75 & 139.05 & 119.38 & 105.38 & 100.56 \\
\midrule
TR-CIE                    & \textbf{206.25} & \textbf{141.00} & \textbf{117.44} & \textbf{107.44} & \textbf{98.12} & \textbf{95.56}\\
\bottomrule
\end{tabular}
\end{table*}

\begin{table*}[htb]
\centering
\setlength{\tabcolsep}{13pt}
\renewcommand{\arraystretch}{1.2}
\small
\caption{Text generation with masked DFM backbone. The metric is unigram entropy on 1024 samples.}
\label{tab:text_uniform_entropy}
\begin{tabular}{lccccc}
\toprule
Sampler & NFE=8 & NFE=16 & NFE=32 & NFE=64 & NFE=128 \\
\midrule
Euler $\tau$-leaping      & 8.23 & 8.13 & 8.15 & 8.02 & 8.03 \\
Tweedie $\tau$-leaping    & 8.23 & 8.14 & 8.13 & 8.03 & 8.02 \\
$\theta$-Trapezoidal      & 8.18 & 8.04 & 7.86 & 7.72 & 7.61 \\
$\theta$-RK2              & 8.19 & 8.05 & 7.88 & 7.74 & 7.64 \\
FHS                 & 8.25 & 8.15 & 8.11 & 8.07 & 8.05 \\
HO-FHS (Extrap.)     & 8.24 & 8.11 & 8.10 & 8.07& 8.04 \\
\midrule
TR-CIE                    & 8.17 & 8.10 & 8.05 & 8.03 & 7.92 \\
\bottomrule
\end{tabular}
\end{table*}

\subsection{More Results on Image Generation}
We conduct image-generation experiments on CIFAR-10 (32×32) using the DFM backbone, following its original setting. We train and evaluate under four schedulers: linear, quadratic, cubic, and cosine. We compare our method with Euler $\tau$-leaping, Tweedie $\tau$-leaping, $\theta$-Trapezoidal and $\theta$-RK2 and report the FID-versus-NFE results in Figure~\ref{fig:fid}. Our method generally outperforms the baselines, especially when NFE is small. Furthermore, we also provide qualitative comparisons at NFE=8 in Figure~\ref{fig:double_column_full_width}. We observe that our method achieves better semantic alignment and higher generation quality compared with the baseline Euler $\tau$-leaping sampler.

\begin{figure}[!htb]
    \centering
    \includegraphics[width=0.8\columnwidth]{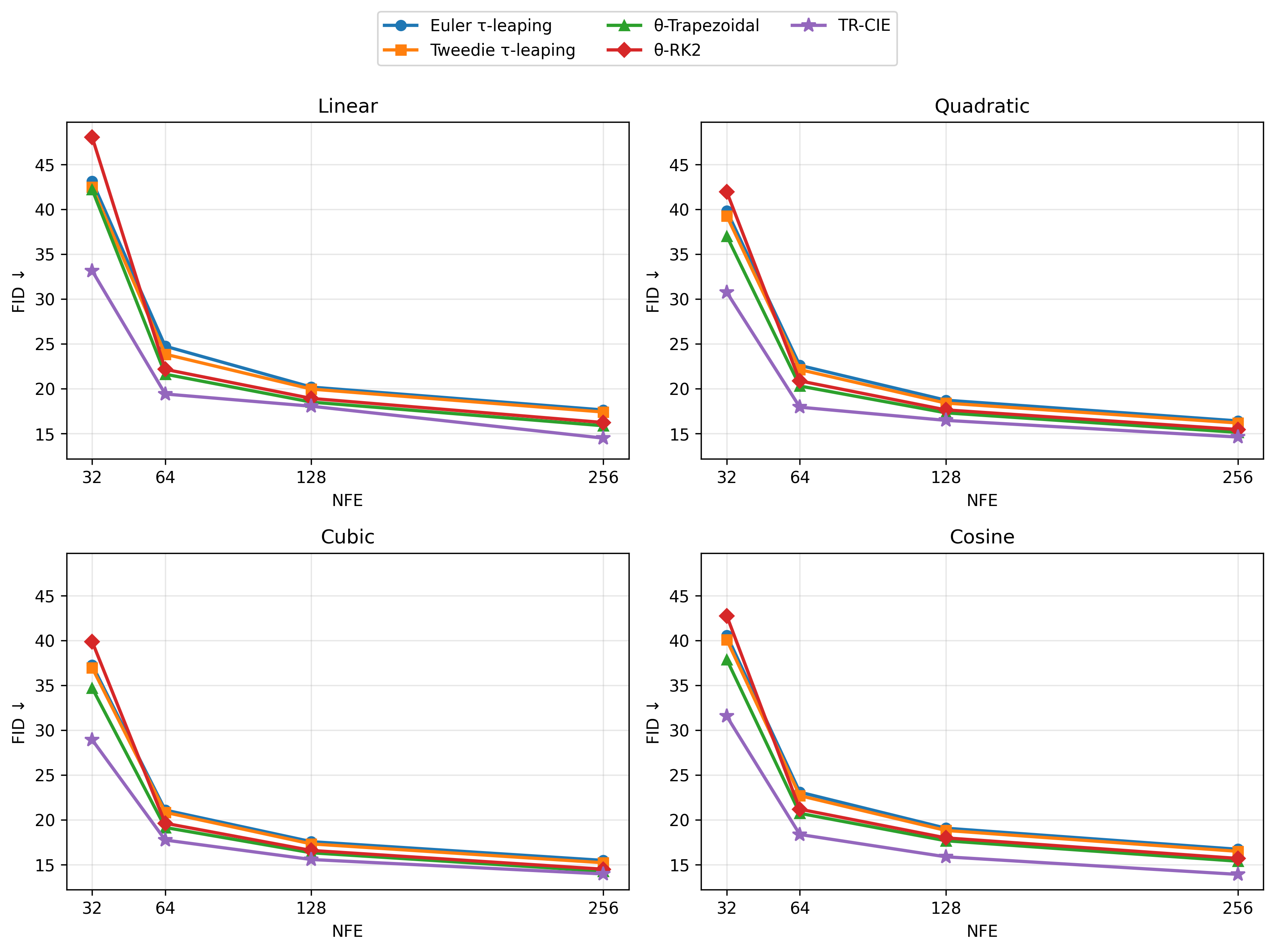} 
    \caption{\textbf{Comparison in terms of FID on the DFM backbone across four sampling schedules.}}
    \label{fig:fid}
\end{figure}

\subsection{Runtime comparison}
We report the runtime comparison under the same hardware and batch-size settings. The experiments are performed on the uniform DFM backbone using a single NVIDIA RTX 4090 GPU with a batch of 64 samples and report the average generation time per sample at NFE = 8, 16, 32, 64, 128 in Table~\ref{tab:runtime_all_nfe}. TR-CIE remains practical in runtime while improving sample quality.

\begin{table*}[htb]
\centering
\setlength{\tabcolsep}{13pt}
\renewcommand{\arraystretch}{1.2}
\small
\caption{Runtime per sample across various NFEs.}
\label{tab:runtime_all_nfe}
\begin{tabular}{lccccc}
\toprule
Sampler & NFE=8 & NFE=16 & NFE=32 & NFE=64 & NFE=128 \\
\midrule
Euler $\tau$-leaping      & 123.8 ms & 242.1 ms & 490.7 ms & 988.9 ms & 1978.6 ms \\
Tweedie $\tau$-leaping    & 74.2 ms  & 158.2 ms & 313.1 ms & 615.8 ms & 1284.0 ms \\
$\theta$-Trapezoidal      & 97.3 ms  & 194.1 ms & 388.0 ms & 775.6 ms & 1553.3 ms \\
$\theta$-RK2              & 91.2 ms  & 181.9 ms & 366.5 ms & 726.9 ms & 1457.8 ms \\
\midrule
TR-CIE                    & 95.5 ms  & 190.8 ms & 378.4 ms & 759.7 ms & 1525.4 ms \\
\bottomrule
\end{tabular}
\end{table*}

\section{Proofs}
\label{sec:appendix_proofs}
This section provides proofs for Theorems~\ref{thm:exact_error}--\ref{thm:cie_error} and the KL decomposition~\eqref{eq:kl_decomp}.
Throughout, we work in the reparameterized $\tau$-domain and follow the notation established in \Cref{sec:preliminaries,sec:method}.

\subsection{Assumptions}
\label{app:assumptions}

We recall the $\tau$-domain coordinate-update intensity $\tilde u_\tau^{d,\theta}(s;x)\ge 0$ for channels $(d,s)$ with $s\neq x^d$, defined in \eqref{eq:tildeu_def}--\eqref{eq:cancel}.
To establish the convergence rates, we adopt the following standing assumptions. These conditions are standard in multistep quadrature analysis and change-of-measure arguments for jump processes~\cite{butcher2016numerical,norris1998markov,ren2025fast,bremaud1981point}. Specifically, Assumptions~\ref{ass:time_reg} and~\ref{ass:grid} support the Taylor-based quadrature bounds. Assumption~\ref{ass:bounded_floor} ensures the integrability of the log-likelihood ratio in \Cref{thm:app_pathkl}, and Assumption~\ref{ass:state_lip} controls the additional drift induced by history reuse.

\begin{assumption}[Time Regularity]
\label{ass:time_reg}
For every frozen state $x$ and channel $(d,s)$, the mapping $\tau\mapsto \tilde u_\tau^{d,\theta}(s;x)$ is $C^2$ on $[0,\tau_N]$ and satisfies
\begin{equation}
\sup_{\tau\in[0,\tau_N]}\sup_{x,d,s\neq x^d}\left|\partial_{\tau}^2\tilde u_\tau^{d,\theta}(s;x)\right|\le M_2.
\label{eq:app_time_reg}
\end{equation}
\end{assumption}
\textit{Justification:} This is a standard smoothness condition for Taylor-based quadrature analysis. We treat it as an idealized regularity assumption to characterize the numerical error.

\begin{assumption}[Boundedness and Numerical Floor]
\label{ass:bounded_floor}
There exist constants $0<\varepsilon_0\le M<\infty$ such that for all $\tau,d$ and $s$, the target intensity $\tilde u$ and the numerical intensity $\hat\mu$ satisfy:
\begin{equation}
\varepsilon_0 \le \tilde u_\tau^{d,\theta}(s;x)\le M,
\qquad
\varepsilon_0 \le \hat\mu_\tau^{d}(s)\le M,
\label{eq:app_bounded_floor}
\end{equation}
where $\hat\mu$ denotes the numerical piecewise-constant intensity defined in \eqref{eq:app_hatmu_def}.
\end{assumption}
\textit{Justification:} The lower bound $\varepsilon_0$ is a standard technical condition in change-of-measure arguments for jump processes to avoid singular log-ratios, and is consistent with applying an explicit positivity floor in implementation. The upper bound $M$  is a boundedness condition for intensity fields, applying to both the model output intensity field and the numerical intensity $\hat\mu$.

\begin{assumption}[State Sensitivity]
\label{ass:state_lip}
There exists a constant $L_x>0$ such that for all $\tau$, channels $(d,s)$, and states $x, x' \in \mathcal{X}$,
\begin{equation}
\left|\tilde u_\tau^{d,\theta}(s;x)-\tilde u_\tau^{d,\theta}(s;x')\right|\le L_x\,\mathrm{d}_H(x,x').
\label{eq:app_state_lip}
\end{equation}
\end{assumption}
\textit{Justification:} This is the discrete analogue of the Lipschitz continuity assumption common in continuous diffusion analysis \cite{song2020score}. It posits that a single token change in the input results in a bounded change in the output intensity field.

\begin{assumption}[Grid Regularity]
\label{ass:grid}
Let $\{h_n\}$ and $\{r_n\}$ be the step sizes and step ratios defined in \Cref{sec:method}.
Assume there exist constants $0<r_{\min}\le r_{\max}<\infty$ such that $r_{\min}\le r_n\le r_{\max}$ for all $n\ge 1$.
\end{assumption}
\textit{Justification:} In our experiments, we adopt a uniform grid in the $\tau$-domain (i.e., $h_n \equiv \text{const}$), which implies $r_n = 1$ for all $n$. Thus, this assumption is strictly satisfied with $r_{\min}=r_{\max}=1$.

\subsection{Pathwise KL Bound}
\label{app:pathkl}

We establish a standard change-of-measure identity for jump processes. This result allows us to upper bound the terminal KL divergence using an integral functional of the intensities and to facilitate the term-by-term error decomposition.

\paragraph{Target and Numerical Processes}
Let $\mathbb{P}$ denote the law of the target CTMC $(X_\tau)_{\tau\in[0,\tau_N]}$ driven by the learned intensities $\tilde u_\tau^{d,\theta}(s;X_{\tau^-})$.
For the numerical process, we define the piecewise-constant numerical intensity $\hat\mu_\tau$ for any $\tau \in [\tau_n, \tau_{n+1})$ as follows:
\begin{equation}
\hat\mu_\tau^{d}(s) :=
\min\!\left\{M,\max\!\left\{\varepsilon_0,\; \frac{\widehat\Lambda_{n,d,s}}{h_n}\right\}\right\}.
\label{eq:app_hatmu_def}
\end{equation}
Let $\mathbb{Q}$ denote the path measure of the process driven by these intensities. Note that $\hat\mu_\tau$ is \textit{predictable} (adapted and left-continuous) as it is determined entirely by the state $X_{\tau_n}$ and history available at the start of the interval $\tau_n$.

\paragraph{Poisson Bregman Divergence}
For scalars $a\ge 0$ and $b>0$, the Bregman divergence associated with the Poisson likelihood is defined as follows:
\begin{equation}
\mathrm{d}_{\mathrm{Breg}}(a,b) := a\log\!\left(\frac{a}{b}\right) - a + b.
\label{eq:app_breg}
\end{equation}

\begin{theorem}[Pathwise KL Upper Bound]
\label{thm:app_pathkl}
Under Assumption~\ref{ass:bounded_floor}, the Kullback-Leibler divergence between the terminal distributions is bounded by the expected cumulative Bregman divergence between the intensities:
\begin{equation}
\KL(p_{\tau_N}\,\|\,q_{\tau_N})
\le
\KL(\mathbb{P}\,\|\,\mathbb{Q})
=
\mathbb E_{\mathbb{P}}\!\left[
\int_0^{\tau_N}\sum_{d=1}^D\sum_{s\neq X_{\tau^-}^d}
\mathrm{d}_{\mathrm{Breg}}\!\Big(\tilde u_\tau^{d,\theta}(s;X_{\tau^-}), \hat\mu_\tau^{d}(s)\Big)\,\mathrm{d}\tau
\right].
\label{eq:app_pathkl}
\end{equation}
\end{theorem}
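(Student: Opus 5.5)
The proof has two parts: the data-processing inequality for the first inequality, and a Girsanov-type change of measure for marked point processes for the identity. Throughout, both processes start from the same $x_0\sim p_{\mathrm{src}}$. The terminal laws $p_{\tau_N}$ and $q_{\tau_N}$ are the pushforwards of $\mathbb{P}$ and $\mathbb{Q}$ under the map $\omega\mapsto X_{\tau_N}(\omega)$. Hence $\KL(p_{\tau_N}\|q_{\tau_N})\le\KL(\mathbb{P}\|\mathbb{Q})$ by data processing, or equivalently by the chain rule for KL applied to the joint law of (terminal state, path). All the work therefore goes into computing $\KL(\mathbb{P}\|\mathbb{Q})$.

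To compute it, view each path as a marked point process on $[0,\tau_N]$ whose marks are the channels $(d,s)$. Under $\mathbb{P}$, the predictable intensity of channel $(d,s)$ is $\tilde u_\tau^{d,\theta}(s;X_{\tau^-})$. Under $\mathbb{Q}$, it is $\hat\mu_\tau^{d}(s)$. This is predictable because it is piecewise constant and fixed by $\mathcal F_{\tau_n}$ on $[\tau_n,\tau_{n+1})$; on a jump it is read against the current coordinate $X_{\tau^-}^d$. Assumption~\ref{ass:bounded_floor} makes both intensities lie in $[\varepsilon_0,M]$. This gives mutual absolute continuity on the finite horizon, and the Radon--Nikodym derivative is given by Jacod's formula (as in Br\'emaud):
\begin{equation*}
\log\frac{\mathrm d\mathbb P}{\mathrm d\mathbb Q}
=\sum_{\text{jumps }(\tau_k,d_k,s_k)}\log\frac{\tilde u_{\tau_k}^{d_k,\theta}(s_k;X_{\tau_k^-})}{\hat\mu_{\tau_k}^{d_k}(s_k)}
-\int_0^{\tau_N}\sum_{d}\sum_{s\neq X_{\tau^-}^d}\big(\tilde u_\tau^{d,\theta}(s;X_{\tau^-})-\hat\mu_\tau^d(s)\big)\,\mathrm d\tau .
\end{equation*}
The total number of channels is finite, at most $D(V-1)$, and each rate is bounded by $M$. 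So the total jump intensity is at most $DVM$, and the number of jumps has exponential moments. Together with the log-ratio bound $|\log(\tilde u/\hat\mu)|\le\log(M/\varepsilon_0)$, this makes the right-hand side integrable under $\mathbb{P}$.

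Next take $\mathbb{E}_{\mathbb P}$ and compensate the jump sum. The process $\sum_k f(\tau_k,d_k,s_k)-\int\sum_{d,s} f\,\tilde u\,\mathrm d\tau$ is a $\mathbb P$-martingale for bounded predictable $f$. Apply this with $f=\log(\tilde u/\hat\mu)$, which is predictable and bounded. The jump sum then has the same expectation as $\int_0^{\tau_N}\sum_{d,s}\tilde u\log(\tilde u/\hat\mu)\,\mathrm d\tau$. Combining this with the compensator term gives, pointwise in $\tau$ and channel, $\tilde u\log(\tilde u/\hat\mu)-\tilde u+\hat\mu=\mathrm{d}_{\mathrm{Breg}}(\tilde u,\hat\mu)$. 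This yields \eqref{eq:app_pathkl}.

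The main obstacle is the rigorous justification of the Girsanov density. One must check that the predictable structure of $\hat\mu$ is correct: it is frozen at the grid points, depends on cached history from $\tau_{n-1}$, and uses the safeguard \eqref{eq:app_hatmu_def}. One must also check that the martingale property holds genuinely and not only locally. I would handle this by localizing at the $m$-th jump time and passing to the limit with dominated convergence, using the bounds from Assumption~\ref{ass:bounded_floor}. A cleaner alternative is to condition on the grid filtration and apply the classical Poisson-process Radon--Nikodym formula interval by interval, then chain the conditional KLs across $n$. A subtle modeling point must be stated explicitly: $\mathbb{Q}$ is the continuous-time process with the frozen piecewise-constant rates, which is what $\hat\mu$ is meant to describe, and not the discrete $\tau$-leaping update with conflict resolution.
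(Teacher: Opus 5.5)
Your proposal is correct and follows the same route as the paper: data processing for the inequality between terminal marginals, then the Girsanov (Jacod) likelihood ratio for marked point processes, with the jump sum compensated under $\mathbb{P}$ to produce the Poisson Bregman integrand, and Assumption~\ref{ass:bounded_floor} supplying absolute continuity and integrability. You additionally spell out the compensation step and the predictability of $\hat\mu$, which the paper's one-paragraph proof leaves implicit. Your caveat that $q_{\tau_N}$ must be read as the terminal law of the frozen-rate continuous-time process $\mathbb{Q}$, not the conflict-resolved $\tau$-leaping update, is a worthwhile clarification the paper does not make.
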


\begin{proof}
The inequality $\KL(p_{\tau_N}\|q_{\tau_N}) \le \KL(\mathbb{P}\|\mathbb{Q})$ follows from the Data Processing Inequality~\cite{cover1999elements}, as the terminal state is a functional of the full path. The equality is a direct application of the Girsanov theorem for point processes, which expresses the log-likelihood ratio $\log \frac{\mathrm{d}\mathbb{P}}{\mathrm{d}\mathbb{Q}}$ in terms of the cumulative intensities~\cite{wan2025error,ren2025fast}. Assumption~\ref{ass:bounded_floor} ensures that the intensities are strictly positive and bounded and guarantees the existence of the Radon-Nikodym derivative and the integrability of the Bregman divergence.
\end{proof}

\subsection{TR-CIE-Exact: Local Error}
\label{app:ab2_exact}

We first analyze the error of the reference estimator. Recall the true stepwise cumulative intensity $\Lambda_{n,d,s}(x)$ defined in \eqref{eq:Lambda_true}. The TR-CIE-Exact estimator uses the variable-step extrapolation defined in \eqref{eq:exact_ab2}.

\begin{lemma}[]
\label{lem:app_exact_local}
Under Assumptions~\ref{ass:time_reg} and~\ref{ass:grid}, for all $n\ge 1$ and channels $(d,s)$, the local error satisfies:
\begin{equation}
\big|\widehat\Lambda^{\mathrm{(ex)}}_{n,d,s}-\Lambda_{n,d,s}(x_n)\big|
\le C_{\mathrm{grid}}\,M_2\,h_n^3,
\label{eq:app_exact_local}
\end{equation}
where $C_{\mathrm{grid}} = \frac{1}{6} + \frac{1}{4r_{\min}}$ is a constant depending only on the grid regularity.
\end{lemma}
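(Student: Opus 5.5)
The plan is a direct Taylor-expansion argument for a fixed frozen state, carried out at the current node $\tau_n$. Fix $n\ge 1$ and a channel $(d,s)$, and set $f(\sigma):=\tilde u_\sigma^{d,\theta}(s;x_n)$ for $\sigma\in[\tau_{n-1},\tau_{n+1}]$. The key point of TR-CIE-Exact is that both evaluations entering \eqref{eq:exact_ab2} use the \emph{same} state $x_n$. Hence $\widehat\Lambda^{\mathrm{(ex)}}_{n,d,s}=h_n[(1+\tfrac{r_n}{2})f(\tau_n)-\tfrac{r_n}{2}f(\tau_{n-1})]$ and $\Lambda_{n,d,s}(x_n)=\int_{\tau_n}^{\tau_{n+1}}f(\sigma)\,\mathrm{d}\sigma$ are both functionals of the single scalar function $f$. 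By Assumption~\ref{ass:time_reg}, $f\in C^2$ with $|f''|\le M_2$, uniformly in $x_n,d,s$. This uniformity is what makes the final constant independent of $n$ and of the channel.

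First step: expand the exact integral. Write $f(\sigma)=f(\tau_n)+f'(\tau_n)(\sigma-\tau_n)+R(\sigma)$ with Lagrange remainder $|R(\sigma)|\le \tfrac{M_2}{2}(\sigma-\tau_n)^2$. Integrating over $[\tau_n,\tau_{n+1}]$ gives
\begin{equation*}
\Lambda_{n,d,s}(x_n)=h_nf(\tau_n)+\tfrac{h_n^2}{2}f'(\tau_n)+E_1,\qquad |E_1|\le \tfrac{M_2}{6}h_n^3 .
\end{equation*}

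Second step: expand the history term backward. Write $f(\tau_{n-1})=f(\tau_n)-h_{n-1}f'(\tau_n)+R'$ with $|R'|\le \tfrac{M_2}{2}h_{n-1}^2$. Substituting into the estimator, the $f(\tau_n)$ coefficients combine to $h_n$. The derivative term becomes $h_n\tfrac{r_n}{2}h_{n-1}f'(\tau_n)=\tfrac{h_n^2}{2}f'(\tau_n)$, using $r_nh_{n-1}=h_n$; this is exactly why the variable-step coefficients are chosen as they are. We are left with
\begin{equation*}
\widehat\Lambda^{\mathrm{(ex)}}_{n,d,s}=h_nf(\tau_n)+\tfrac{h_n^2}{2}f'(\tau_n)-\tfrac{h_nr_n}{2}R'.
\end{equation*}

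Third step: subtract the two expansions. The zeroth- and first-order terms cancel, so
\begin{equation*}
\big|\widehat\Lambda^{\mathrm{(ex)}}_{n,d,s}-\Lambda_{n,d,s}(x_n)\big|\le \tfrac{M_2}{6}h_n^3+\tfrac{h_nr_n}{2}\cdot\tfrac{M_2}{2}h_{n-1}^2 .
\end{equation*}
Using $h_{n-1}=h_n/r_n$, the second term equals $\tfrac{M_2}{4r_n}h_n^3$. Assumption~\ref{ass:grid} gives $r_n\ge r_{\min}$, so the second term is at most $\tfrac{M_2}{4r_{\min}}h_n^3$. This yields the claimed constant $C_{\mathrm{grid}}=\tfrac16+\tfrac1{4r_{\min}}$.

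There is no real obstacle here. The only point needing care is that the backward Taylor step uses regularity of $f$ on $[\tau_{n-1},\tau_n]$, i.e.\ at the previous time but the current state $x_n$. This is covered because Assumption~\ref{ass:time_reg} holds for every frozen state on all of $[0,\tau_N]$. The second place to be careful is rewriting $h_{n-1}^2$ in terms of $h_n$, where the lower ratio bound $r_{\min}$, rather than $r_{\max}$, enters.
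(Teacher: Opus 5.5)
Your proposal is correct and follows the paper's proof essentially step for step. Both arguments use a forward Taylor expansion of the frozen-state integral with remainder $\tfrac{M_2}{6}h_n^3$ and a backward expansion of the history term with remainder $\tfrac{M_2}{2}h_{n-1}^2$, cancel the first-order term via $r_nh_{n-1}=h_n$, and bound $\tfrac{h_nr_n}{2}|R'|\le \tfrac{M_2}{4r_n}h_n^3\le\tfrac{M_2}{4r_{\min}}h_n^3$, which gives $C_{\mathrm{grid}}=\tfrac16+\tfrac{1}{4r_{\min}}$.
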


\begin{proof}
Fix $x_n, d, s$ and define the frozen-state intensity path $f(\tau):=\tilde u_\tau^{d,\theta}(s;x_n)$. Since $f$ is $C^2$ with $|f''|\le M_2$ (Assumption~\ref{ass:time_reg}), we perform Taylor expansions around $\tau_n$:
\begin{align}
\text{Integral:} \quad & \int_{\tau_n}^{\tau_{n+1}} f(\tau)\,\mathrm{d}\tau
=
h_n f(\tau_n) + \frac{h_n^2}{2} f'(\tau_n) + R_1,
\quad &|R_1|\le \frac{M_2}{6}h_n^3. \label{eq:app_taylor_int}\\
\text{History:} \quad & f(\tau_{n-1})
=
f(\tau_n) - h_{n-1}f'(\tau_n) + R_2,
\quad &|R_2|\le \frac{M_2}{2}h_{n-1}^2. \label{eq:app_taylor_prev}
\end{align}
Substituting~\eqref{eq:app_taylor_prev} into the definition of $\widehat\Lambda^{\mathrm{(ex)}}_{n,d,s}$ \eqref{eq:exact_ab2} and utilizing $h_{n-1} = \frac{h_n}{r_n} $, we obtain:
\begin{align*}
\widehat\Lambda^{\mathrm{(ex)}}_{n,d,s}
&=
h_n\left[\left(1+\frac{r_n}{2}\right)f(\tau_n)-\frac{r_n}{2}\left(f(\tau_n)-\frac{h_n}{r_n}f'(\tau_n)+R_2\right)\right]\\
&=
h_n f(\tau_n) + \frac{h_n^2}{2} f'(\tau_n) - \frac{h_n r_n}{2}R_2.
\end{align*}
Subtracting the true integral expansion \eqref{eq:app_taylor_int} from this estimator yields the error:
\[
\widehat\Lambda^{\mathrm{(ex)}}_{n,d,s}-\int_{\tau_n}^{\tau_{n+1}} f(\tau)\,\mathrm{d}\tau
=
-\frac{h_n r_n}{2}R_2 - R_1.
\]
We bound the history remainder term using $h_{n-1} = h_n/r_n$ and $r_n \ge r_{\min}$ (Assumption~\ref{ass:grid}):
\[
\left|\frac{h_n r_n}{2}R_2\right|
\le
\frac{h_n r_n}{2} \cdot \frac{M_2}{2} \left(\frac{h_n}{r_n}\right)^2
=
\frac{M_2}{4 r_n}h_n^3
\le
\frac{M_2}{4 r_{\min}}h_n^3.
\]
Combining this with the bound for $|R_1|$, we apply the triangle inequality:
\[
\big|\widehat\Lambda^{\mathrm{(ex)}}_{n,d,s}-\Lambda_{n,d,s}(x_n)\big|
\le
\left(\frac{1}{6}+\frac{1}{4r_{\min}}\right)M_2\,h_n^3,
\]
which completes the proof.
\end{proof}

\begin{proof}[Proof of Theorem~\ref{thm:exact_error}]
The theorem follows directly from Lemma~\ref{lem:app_exact_local} by setting $C_1 = C_{\mathrm{grid}} M_2$.
\end{proof}

\subsection{TR-CIE: Local Error with Drift}
\label{app:ab2_lite}

The practical TR-CIE estimator approximates the history term by reusing the cached value from the previous step $(\tau_{n-1}, x_{n-1})$:
the estimator defined in \eqref{eq:lite_ab2}.

\begin{lemma}[]
\label{lem:app_lite_local}
Under Assumptions~\ref{ass:time_reg}, \ref{ass:state_lip}, and~\ref{ass:grid}, for all $n\ge 1$, the local error satisfies:
\begin{equation}
\big|\widehat\Lambda_{n,d,s}-\Lambda_{n,d,s}(x_n)\big|
\le
C_{\mathrm{grid}}\,M_2\,h_n^3
+
\frac{r_n}{2}h_n\,L_x\,\mathrm{d}_H(x_n,x_{n-1}).
\label{eq:app_lite_local}
\end{equation}
\end{lemma}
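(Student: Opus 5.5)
The plan is to reduce Lemma~\ref{lem:app_lite_local} to Lemma~\ref{lem:app_exact_local} by comparing the practical estimator \eqref{eq:lite_ab2} with the reference estimator \eqref{eq:exact_ab2}, using the triangle inequality:
\[
\big|\widehat\Lambda_{n,d,s}-\Lambda_{n,d,s}(x_n)\big|
\le
\big|\widehat\Lambda^{\mathrm{(ex)}}_{n,d,s}-\Lambda_{n,d,s}(x_n)\big|
+
\big|\widehat\Lambda_{n,d,s}-\widehat\Lambda^{\mathrm{(ex)}}_{n,d,s}\big|.
\]
The first term is a purely temporal quadrature error at the frozen state $x_n$. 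Lemma~\ref{lem:app_exact_local} bounds it by $C_{\mathrm{grid}}M_2h_n^3$ under Assumptions~\ref{ass:time_reg} and~\ref{ass:grid}, so I will take it directly from that lemma.

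For the second term, I will note that the two estimators share the same current-time coefficient $h_n(1+r_n/2)\tilde u_{n,d,s}$. They differ only in the history term, which is evaluated at $(\tau_{n-1},x_{n-1})$ in one case and at $(\tau_{n-1},x_n)$ in the other. Subtracting therefore gives
\[
\widehat\Lambda_{n,d,s}-\widehat\Lambda^{\mathrm{(ex)}}_{n,d,s}
=
-\frac{r_n}{2}h_n\Big(\tilde u_{\tau_{n-1}}^{d,\theta}(s;x_{n-1})-\tilde u_{\tau_{n-1}}^{d,\theta}(s;x_n)\Big).
\]
Applying Assumption~\ref{ass:state_lip} at the fixed time $\tau_{n-1}$ bounds the bracket by $L_x\,\mathrm{d}_H(x_n,x_{n-1})$. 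This yields the drift term $\frac{r_n}{2}h_nL_x\,\mathrm{d}_H(x_n,x_{n-1})$ exactly as stated, and adding the two bounds completes the lemma. Theorem~\ref{thm:cie_error} would then follow by taking $C_2=C_{\mathrm{grid}}M_2$ and $C_3=\tfrac{r_{\max}}{2}L_x$ via Assumption~\ref{ass:grid}.

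The only delicate point is bookkeeping of the channel index. The channel $(d,s)$ is defined relative to $x_n$ (requiring $s\neq x_n^d$), but the cached value was computed at $x_{n-1}$, where possibly $s=x_{n-1}^d$. In that case the cached entry is either undefined or represents a diagonal entry. I would handle this by interpreting $\tilde u_{\tau}^{d,\theta}(s;x)$ as the network output $g$ for every $s$, which is how the cache stores it, so that Assumption~\ref{ass:state_lip} applies uniformly across all $(d,s)$. I would also remark that the lemma concerns the unclamped estimator \eqref{eq:lite_ab2}. Since the target value lies in $[\varepsilon_0 h_n, Mh_n]$ by Assumption~\ref{ass:bounded_floor}, clamping to that interval is a projection that can only reduce the error, so the bound also holds after \eqref{eq:clamping}.
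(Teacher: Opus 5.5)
Your proposal is correct and follows essentially the same route as the paper: the paper adds and subtracts $\tilde u_{\tau_{n-1}}^{d,\theta}(s;x_n)$ to write $\widehat\Lambda_{n,d,s}=\widehat\Lambda^{\mathrm{(ex)}}_{n,d,s}+\frac{r_n}{2}h_n\big(\tilde u_{\tau_{n-1}}^{d,\theta}(s;x_n)-\tilde u_{\tau_{n-1}}^{d,\theta}(s;x_{n-1})\big)$. It then bounds the first piece by Lemma~\ref{lem:app_exact_local} and the second by Assumption~\ref{ass:state_lip}, exactly as you do. Your side remarks are sound refinements that the paper does not make: the channel-index issue when $s=x_{n-1}^d$, and clamping acting as a projection onto an interval that contains the target (which needs Assumption~\ref{ass:bounded_floor}). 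Neither is needed for the stated bound on the unclamped estimator.
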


\begin{proof}
We add and subtract $\tilde u_{\tau_{n-1}}^{d,\theta}(s;x_n)$:
\begin{align*}
\widehat\Lambda_{n,d,s}
&=
h_n\left[\left(1+\frac{r_n}{2}\right)\tilde u_{\tau_n}^{d,\theta}(s;x_n)
-\frac{r_n}{2}\tilde u_{\tau_{n-1}}^{d,\theta}(s;x_n)\right]
+
\frac{r_n}{2}h_n\left(\tilde u_{\tau_{n-1}}^{d,\theta}(s;x_n)-\tilde u_{\tau_{n-1}}^{d,\theta}(s;x_{n-1})\right)\\
&=
\widehat\Lambda^{\mathrm{(ex)}}_{n,d,s}
+
\frac{r_n}{2}h_n\left(\tilde u_{\tau_{n-1}}^{d,\theta}(s;x_n)-\tilde u_{\tau_{n-1}}^{d,\theta}(s;x_{n-1})\right).
\end{align*}
By the triangle inequality:
\[
\big|\widehat\Lambda_{n,d,s}-\Lambda_{n,d,s}(x_n)\big|
\le
\big|\widehat\Lambda^{\mathrm{(ex)}}_{n,d,s}-\Lambda_{n,d,s}(x_n)\big|
+
\frac{r_n}{2}h_n\left|\tilde u_{\tau_{n-1}}^{d,\theta}(s;x_n)-\tilde u_{\tau_{n-1}}^{d,\theta}(s;x_{n-1})\right|.
\]
The first term is bounded by Lemma~\ref{lem:app_exact_local}. For the second term, we apply the state sensitivity assumption (Assumption~\ref{ass:state_lip}) to the intensity difference:
\[
\left|\tilde u_{\tau_{n-1}}^{d,\theta}(s;x_n)-\tilde u_{\tau_{n-1}}^{d,\theta}(s;x_{n-1})\right| \le L_x\,\mathrm{d}_H(x_n, x_{n-1}).
\]
Combining these yields the stated bound. 
\end{proof}

\begin{proof}[Proof of Theorem~\ref{thm:cie_error}]
The theorem follows directly from Lemma~\ref{lem:app_lite_local}. By setting $C_2 = C_{\mathrm{grid}} M_2$ and $C_3 = \frac{r_{\max}}{2} L_x$ (using $r_n \le r_{\max}$ from Assumption~\ref{ass:grid}), we recover the form in \eqref{eq:cie_pathwise}.
\end{proof}

\subsection{KL Decomposition from Intensity Errors}
\label{app:kl_decomp}

To prove the global error decomposition in Eq.~\eqref{eq:kl_decomp}, we focus on bounding the expected cumulative Bregman divergence on the RHS of \Cref{thm:app_pathkl}:
\[
\text{RHS} = \mathbb E_{\mathbb{P}}\!\left[
\int_0^{\tau_N}\sum_{d=1}^D\sum_{s\neq X_{\tau^-}^d}
\mathrm{d}_{\mathrm{Breg}}\!\Big(\tilde u_\tau^{d,\theta}(s;X_{\tau^-}), \hat\mu_\tau^{d}(s)\Big)\,\mathrm{d}\tau
\right].
\]
Our strategy is to decompose this total integral into stepwise contributions $\sum_{n=0}^{N-1} \mathcal{E}_n$, where each $\mathcal{E}_n$ represents the error accumulated over $[\tau_n, \tau_{n+1})$. We then further decompose each $\mathcal{E}_n$ by inserting intermediate intensity terms.

\paragraph{Definitions}
Recall the numerical intensity $\hat\mu_\tau$ defined in \eqref{eq:app_hatmu_def}. We additionally define the \textit{stepwise average intensity} at the frozen state $X_{\tau_n}$ as follows:
\begin{equation}
\bar u_{n,d,s}(X_{\tau_n})
:=
\frac{1}{h_n}\int_{\tau_n}^{\tau_{n+1}}\tilde u_\sigma^{d,\theta}(s;X_{\tau_n})\,\mathrm{d}\sigma
=
\frac{\Lambda_{n,d,s}(X_{\tau_n})}{h_n}.
\label{eq:app_avg_u}
\end{equation}

\paragraph{Lipschitz Property}
Under Assumption~\ref{ass:bounded_floor}, the Bregman divergence is Lipschitz continuous. There exists a constant $C_{\Phi}$ such that for all relevant intensities $a, a', b, b'$:
\begin{equation}
\big|\mathrm{d}_{\mathrm{Breg}}(a,b)-\mathrm{d}_{\mathrm{Breg}}(a',b')\big|
\le
C_{\Phi}\big(|a-a'|+|b-b'|\big).
\label{eq:app_breg_lip}
\end{equation}

\paragraph{Stepwise Decomposition}
Fix a step $n$ and consider $\tau\in[\tau_n,\tau_{n+1})$. The instantaneous error term is $\mathrm{d}_{\mathrm{Breg}}(\tilde u_\tau(X_{\tau^-}),\hat\mu_\tau(\bar X_{\tau}))$, where $\bar X_\tau = X_{\tau_n}$ is the state frozen at the start of the step.
We apply the Lipschitz property \eqref{eq:app_breg_lip} twice to insert intermediate terms:
\begin{enumerate}
    \item \textbf{Introduce Frozen State $X_{\tau_n}$:}
    Compare the true process $\tilde u_\tau(X_{\tau^-})$ with the frozen process $\tilde u_\tau(X_{\tau_n})$.
    \begin{align}
    \mathrm{d}_{\mathrm{Breg}}\Big(\tilde u_\tau(X_{\tau^-}),\hat\mu_\tau\Big)
    \le
    \mathrm{d}_{\mathrm{Breg}}\Big(\tilde u_\tau(X_{\tau_n}),\hat\mu_\tau\Big) 
     + C_{\Phi}\underbrace{\Big|\tilde u_\tau(X_{\tau^-})-\tilde u_\tau(X_{\tau_n})\Big|}_{\text{State Drift}}.
    \label{eq:decomp_step1}
    \end{align}
    Here $\hat\mu_\tau$ is the predictable numerical intensity defined in \eqref{eq:app_hatmu_def}, which is determined by $X_{\tau_n}$ and is constant on $\tau \in [\tau_n,\tau_{n+1})$.

    \item \textbf{Introduce Average Intensity $\bar u_n$:}
    Compare the instantaneous frozen intensity $\tilde u_\tau(X_{\tau_n})$ with its average $\bar u_n(X_{\tau_n})$.
    \begin{align}
    \mathrm{d}_{\mathrm{Breg}}\Big(\tilde u_\tau(X_{\tau_n}),\hat\mu_\tau\Big)
    \le
    \mathrm{d}_{\mathrm{Breg}}\Big(\bar u_n(X_{\tau_n}),\hat\mu_\tau\Big)
     + C_{\Phi}\underbrace{\Big|\tilde u_\tau(X_{\tau_n})-\bar u_n(X_{\tau_n})\Big|}_{\text{Time Variation}}.
    \label{eq:decomp_step2}
    \end{align}
\end{enumerate}

\paragraph{Total Error Aggregation}
Substituting \eqref{eq:decomp_step2} into \eqref{eq:decomp_step1}, integrating over $\tau \in [\tau_n, \tau_{n+1})$, and taking the expectation $\mathbb{E}_{\mathbb{P}}$, we obtain the per-step error $\mathcal{E}_n$. Summing over all steps $n=0 \dots N-1$ yields the global decomposition:
\begin{align}
\KL(p_{\tau_N}\|q_{\tau_N}) \le \sum_{n=0}^{N-1} \mathcal{E}_n \le \mathcal{E}_{\mathrm{int}} + \mathcal{E}_{\mathrm{freeze}} + \mathcal{E}_{\mathrm{var}},
\end{align}
where the three components correspond to the three non-zero terms derived above:
\begin{itemize}
    \item $\mathcal{E}_{\mathrm{int}} = \sum_n \mathbb{E}\int_{\tau_n}^{\tau_{n+1}} \mathrm{d}_{\mathrm{Breg}}(\bar u_n, \hat\mu_\tau) \mathrm{d}\tau$;
    \item $\mathcal{E}_{\mathrm{freeze}} = \sum_n C_\Phi \mathbb{E}\int_{\tau_n}^{\tau_{n+1}} |\tilde u_\tau(X_{\tau^-}) - \tilde u_\tau(X_{\tau_n})| \mathrm{d}\tau$;
    \item $\mathcal{E}_{\mathrm{var}} = \sum_n C_\Phi \mathbb{E}\int_{\tau_n}^{\tau_{n+1}} |\tilde u_\tau(X_{\tau_n}) - \bar u_n| \mathrm{d}\tau$.
\end{itemize}
Explicit bounds for these terms are provided in Appendix~\ref{app:bounds}.

\subsection{Scaling Rates}
\label{app:bounds}

In this section, we provide the specific bounds for the three error components identified in the global decomposition \eqref{eq:kl_decomp}. Let $h_{\max} := \max_n h_n$ denote the maximum step size.

\paragraph{(i) Freezing Error $\mathcal{E}_{\mathrm{freeze}}$}
This term arises from the state drift $X_{\tau^-}$ relative to the frozen state $X_{\tau_n}$ within a step. Under Assumption~\ref{ass:state_lip}, the intensity difference is bounded by the Hamming distance:
\[
\Big|\tilde u_\tau^{d,\theta}(s;X_{\tau^-})-\tilde u_\tau^{d,\theta}(s;X_{\tau_n})\Big|
\le
L_x\,\mathrm{d}_H(X_{\tau^-},X_{\tau_n})
\le
L_x\,N_{(\tau_n,\tau]},
\]
where $N_{(\tau_n,\tau]}$ denotes the number of jumps occurring in the interval $(\tau_n,\tau]$.
The total exit rate of the system is the sum of intensities over all $D$ coordinates and their $V-1$ possible target values. By Assumption~\ref{ass:bounded_floor}, each intensity entry is bounded by $M$, therefore the total rate is bounded by $D(V-1)M$. Consequently, the expected number of jumps satisfies $\mathbb E[N_{(\tau_n,\tau]}]\le D(V-1)M(\tau-\tau_n)$. Integrating this expectation over $\tau \in [\tau_n, \tau_{n+1}]$:
\[
\int_{\tau_n}^{\tau_{n+1}} \mathbb{E}[\mathrm{d}_H(X_{\tau^-},X_{\tau_n})] \,\mathrm{d}\tau
\le
\int_{\tau_n}^{\tau_{n+1}} D(V-1)M(\tau - \tau_n) \,\mathrm{d}\tau
= \frac{D(V-1)M}{2} h_n^2.
\]
Summing over all $N$ steps, the global freezing error scales as:
\[
\mathcal{E}_{\mathrm{freeze}} = \sum_{n=0}^{N-1} O(h_n^2) = O(h_{\max} \cdot \tau_N) = O(h_{\max}).
\]
This confirms that the baseline error inherent to $\tau$-leaping is $O(h)$.

\paragraph{(ii) Time-Variation Error $\mathcal{E}_{\mathrm{var}}$}
This term captures the deviation of the instantaneous intensity $\tilde u_\tau$ from its stepwise average $\bar u_n$ at the frozen state. Since $\tilde u_\tau$ is $C^2$ in time, we apply Taylor analysis to show that for $\tau \in [\tau_n, \tau_{n+1}]$:
\[
\left|\tilde u_\tau(X_{\tau_n}) - \bar u_n(X_{\tau_n})\right| \le C' h_n.
\]
Integrating this deviation over the step length $h_n$ yields a local error of $O(h_n^2)$.
Summing over all steps, the global time-variation error scales as:
\[
\mathcal{E}_{\mathrm{var}} = \sum_{n=0}^{N-1} O(h_n^2) = O(h_{\max}).
\]
\paragraph{(iii) Integration Error $\mathcal{E}_{\mathrm{int}}$}
This term reflects how well our numerical solver approximates the integral $\Lambda_{n,d,s}$.
Recall that the numerical intensity $\hat\mu_\tau^{d}(s)$ (defined in \eqref{eq:app_hatmu_def}) and the average target intensity $\bar u_{n,d,s}(X_{\tau_n})$ (defined in \eqref{eq:app_avg_u}) are both constant with respect to $\tau$ within the interval $[\tau_n, \tau_{n+1})$. Therefore, the integral of the Bregman divergence simplifies to the product of the interval length $h_n$ and the divergence value.
Using the Lipschitz property of the Bregman divergence:
\[
\mathbb{E}\int_{\tau_n}^{\tau_{n+1}} \mathrm{d}_{\mathrm{Breg}}(\bar u_n, \hat\mu_\tau) \,\mathrm{d}\tau
= h_n \, \mathbb{E}\left[ \mathrm{d}_{\mathrm{Breg}}\left(\frac{\Lambda_{n,d,s}}{h_n}, \frac{\widehat\Lambda_{n,d,s}}{h_n}\right) \right]
\lesssim \mathbb{E}\big|\Lambda_{n,d,s} - \widehat\Lambda_{n,d,s}\big|.
\]
Based on our local error theorems, we note that TR-CIE-Exact~\Cref{thm:exact_error} implies a local error of $O(h_n^3)$. Summing over $N$ steps ($N \propto 1/h$), the global contribution is second-order $O(h_{\max}^2)$.
For TR-CIE~\Cref{thm:cie_error}, it implies a local error of $O(h_n^3) + O(h_n \cdot \mathrm{d}_H)$. The first term contributes $O(h_{\max}^2)$, while the second term contributes $O(h_{\max})$.
Though TR-CIE maintains the global $O(h_{\max})$ convergence rate, it effectively suppresses the dominant constant in the error bound. Consequently, under a fixed computational budget, this reduction in the error magnitude leads to a better performance compared to the baseline.

\section{Additional Details}
\label{sec:appendix_details}

In this section, we first provide a quantitative analysis of the error contributions. Then, we perform experiments on the scaling condition of the Hamming Drift. Next, we conduct experiments on the scaling condition of the Hamming Drift, and provide further implementation details regarding grid schedules and positive clamping.

\subsection{Comparison between Temporal Integration and State Drift}
\label{subsec:drift_analysis}

To contextualize the state-drift term in Theorem~\ref{thm:cie_error}, we empirically quantify the relative magnitudes of the temporal integration and state drift proxies along sampling trajectories both in the original $t$ domain and the reparamerized $\tau$ domain.

\paragraph{Experimental Setup}
Following the settings of the main text, we use the masked DFM backbone with NFE=64 and record statistics over 128 trajectories. We define proxies for the two error components, which are the integration term $C_2 h_n^3$ and the drift term $C_3 h_n \mathrm{d}_H$, since they cannot be calculated directly.

\paragraph{Measurement Definitions}
The temporal integration proxy measures the $\tau$-domain intensity curvature:
\[
\mathrm{Curv}(n) := \max_{d,s}\left|\frac{\tilde u_{\tau_{n+1}}^{d,\theta}(s;x_n)-2\tilde u_{\tau_n}^{d,\theta}(s;x_n)+\tilde u_{\tau_{n-1}}^{d,\theta}(s;x_n)}{h^2}\right|,
\]
reflecting the constant $C_2 \propto |\partial_\tau^2 \tilde u|$. The state drift proxy measures intensity sensitivity to state changes:
\[
\mathrm{Drift}(n) := \max_{d,s} \widehat L_x(n) \cdot \mathrm{d}_H(x_n, x_{n-1}), \quad \text{where } \widehat L_x(n) \approx \frac{|\Delta \tilde u|}{\mathrm{d}_H + \epsilon_{\mathrm{den}}}.
\]
These proxies represent the intensity error density introduced by each source per step.

\paragraph{Dominance of Integration Error and Efficacy of TR}
As shown in the first row of \Cref{fig:stiffness_analysis}, the temporal integration proxy is significantly larger than the drift proxy for the vast majority of the trajectory except for the terminal sampling stage. The second row indicates that the magnitude of the temporal integration proxy uniformly dominates the state drift proxy across all time steps.

Notably, in the $\tau$ domain, the temporal integration proxy decays as the mapped physical $t \to 1$, whereas it explodes in the $t$ domain. This observation demonstrates the efficacy of our time reparameterization in absorbing the schedule stiffness and ensuring smooth, bounded dynamics near the terminal sampling stage. Consequently, reducing the dominant temporal integration error via TR-CIE yields significant improvements in overall sampling accuracy.

\subsection{Hamming Drift Scaling}
\label{hamming:drift}

Corollary~\ref{cor:cie_expected} interprets the drift contribution under the scaling condition
$\mathbb{E}[\mathrm{d}_H(x_n,x_{n-1})]\le C_d\,h_{n-1}$.
To assess this condition empirically, for each NFE budget we estimate the normalized drift ratio
\[
R := \max_{n\in\{1,\dots,N-1\}} \frac{\mathbb{E}[\mathrm{d}_H(x_n,x_{n-1})]}{h_{n-1}},
\]
where the expectation is approximated by averaging over 128 sampling trajectories.
Table~\ref{tab:drift_scaling} reports the resulting $R$ values for the masked DFM backbone.
We observe that $R$ varies mildly across various NFE and step size $h_{n-1}$, which indicates the existence of a constant $C_d$.

\subsection{Implementation Details}
\label{subsec:imp_details}

\paragraph{$\tau$-Grid Schedule}
As defined in Eq.~\eqref{eq:tau_def}, we map the physical time $t \in [0, 1]$ to the reparameterized time $\tau \in [0, \tau_N]$. In all experiments, we employ a uniform grid in the $\tau$-domain: $\tau_n = n \cdot (\frac{\tau_N}{N})$. Due to the logarithmic mapping, this naturally allocates higher resolution near the terminal time $t=1$ where transition intensities change rapidly.

\paragraph{Sensitivity sweep on $\epsilon_0$}
We perform a sensitivity sweep on the lower clamp threshold $\epsilon_0$ on the uniform DFM backbone. Table~\ref{tab:epsilon_sensitivity} shows that good performance is preserved only when $\epsilon_0$ is kept extremely close to zero; larger values progressively degrade the sampler. This indicates that the clamp should be viewed as a numerical nonnegativity safeguard.

\begin{remark}[On the Positivity of Extrapolated Intensities]
    Due to the nature of linear extrapolation, TR-CIE may occasionally produce negative extrapolated intensities, so our analysis implicitly assumes the extrapolated intensity is nonnegative, as is standard in related multistep corrections for discrete CTMC sampling \cite{gillespie2001approximate,ren2025fast,hu2011weak,anderson2009weak}. In practice we enforce positivity by clamping before Poisson sampling, and we empirically monitor the clamping frequency on all tasks. We find that positivity holds with relatively high probability, and the clamping frequency decreases as the NFE increases in Table~\ref{tab:positive_pct_tasks}. However, clamping is triggered nontrivially in the low-NFE regime, which still leaves some uncertainty about how benign it is in more challenging or out-of-distribution settings.
\end{remark}

\begin{table}[ht]
\centering
\small
\setlength{\tabcolsep}{13pt}
\caption{\textbf{Empirical Hamming-drift scaling for Corollary~\ref{cor:cie_expected}}.
We report the drift ratio $R$ estimated from 128 sampling trajectories.}
\label{tab:drift_scaling}
\begin{tabular}{lcccccc}
\toprule
 & NFE = 8 & NFE = 16 & NFE = 32 & NFE = 64 & NFE = 128 & NFE = 256 \\
\midrule
$h_{n-1}$  & 0.13863 & 0.06774 & 0.03398 & 0.01627 & 0.00797 & 0.00394 \\
$R$ (mean$\pm$std)       & $1018 \pm 58$ & $1046 \pm 49$ & $1071 \pm 38$ & $1083 \pm 41$ & $1112 \pm 46$ & $1125 \pm 44$ \\
\bottomrule
\end{tabular}
\end{table}

\begin{table}[ht]
\centering
\small
\setlength{\tabcolsep}{15pt}
\caption{\textbf{Sensitivity of TR-CIE to the lower clamp threshold $\epsilon_0$.}
We report GPT-2 perplexity on 1024 samples for uniform DFM text generation under different NFE.
}
\label{tab:epsilon_sensitivity}
\begin{tabular}{lccc}
\toprule
$\epsilon_0$ & NFE = 8 & NFE = 16 & NFE = 32 \\
\midrule
0                  & 209.84   & 152.09   & 129.66 \\
$1.00 \times 10^{-12}$ & 209.84   & 152.09   & 129.66 \\
$1.00 \times 10^{-10}$ & 214.63   & 159.75   & 132.15 \\
$1.00 \times 10^{-8}$  & 231.88   & 166.30   & 135.15 \\
$1.00 \times 10^{-7}$  & 277.86   & 171.81   & 138.65 \\
$1.00 \times 10^{-6}$  & 2499.68  & 2704.36  & 1858.68 \\
$5.00 \times 10^{-6}$  & 20839.62 & 19125.56 & 16645.76 \\
\bottomrule
\end{tabular}
\end{table}

\begin{table}[ht]
\centering
\small
\setlength{\tabcolsep}{15pt}
\caption{\textbf{Percentage (\%) of nonnegative extrapolated intensities across tasks.}
We report the fraction of extrapolated $\tau$-domain intensities that are already nonnegative before clamping across different NFE.
}
\label{tab:positive_pct_tasks}
\begin{tabular}{lccccc}
\toprule
Task & NFE = 8 & NFE = 32 & NFE = 64 & NFE = 128 & NFE = 256 \\
\midrule
Countdown      & 97.9 $\pm$ 0.4 & 98.2 $\pm$ 0.4 & 98.5 $\pm$ 0.3 & 98.8 $\pm$ 0.3 & 99.0 $\pm$ 0.2 \\
Text-to-image  & 95.6 $\pm$ 1.0 & 96.9 $\pm$ 0.9 & 98.5 $\pm$ 0.7 & 99.1 $\pm$ 0.6 & 99.3 $\pm$ 0.5 \\
Text (Uniform) & 97.1 $\pm$ 0.7 & 97.4 $\pm$ 0.6 & 97.8 $\pm$ 0.5 & 98.1 $\pm$ 0.4 & 98.9 $\pm$ 0.3 \\
Text (Mask)    & 95.1 $\pm$ 1.3 & 95.9 $\pm$ 0.8 & 96.8 $\pm$ 0.6 & 97.6 $\pm$ 0.5 & 98.1 $\pm$ 0.4 \\
\bottomrule
\end{tabular}
\end{table}

\begin{figure}[h]
    \centering
    \includegraphics[width=0.98\textwidth]{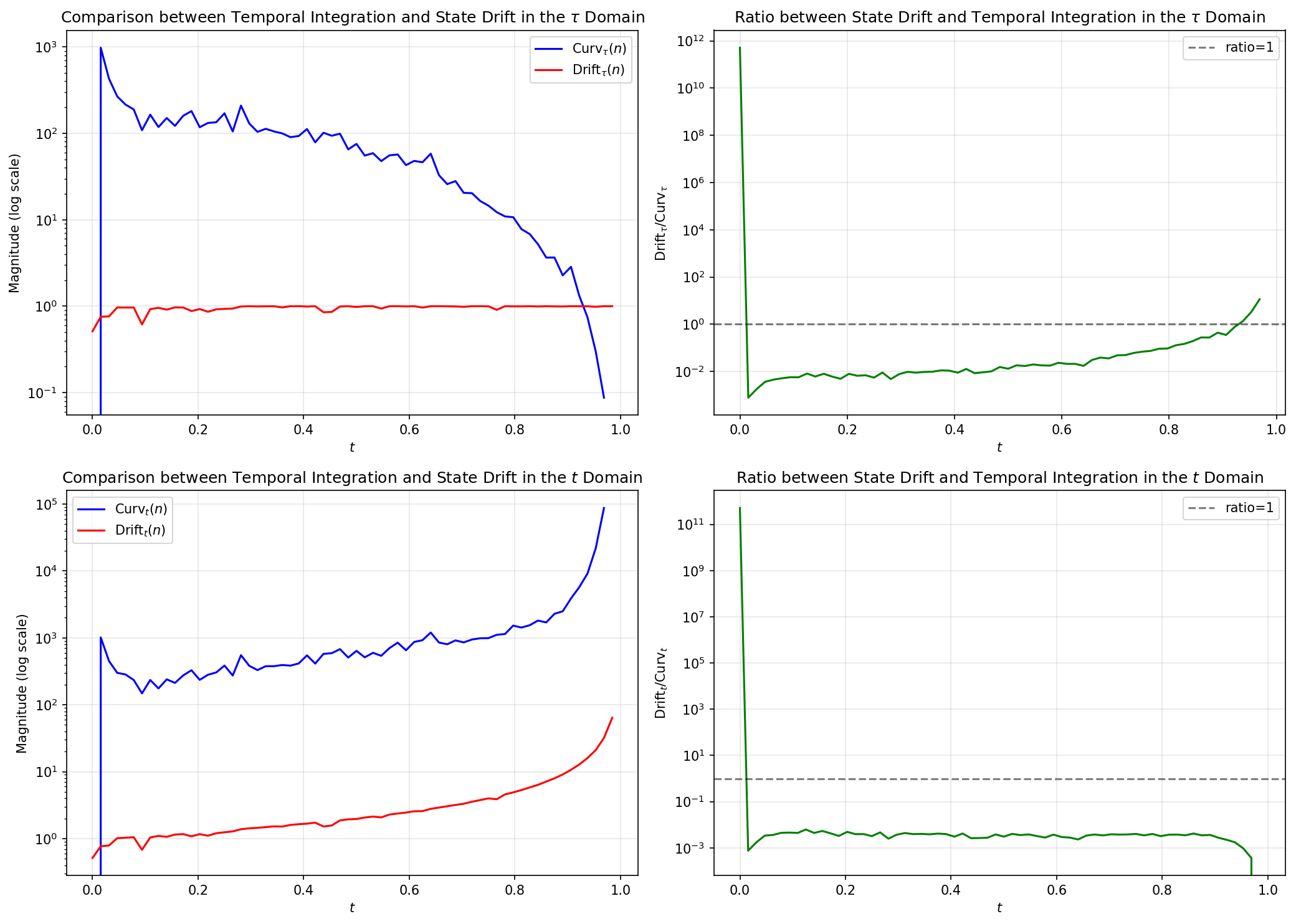} 
    \caption{\textbf{Comparison of temporal integration and state drift proxies.} 
    \textbf{(Bottom)} In the physical $t$ domain, the integration proxy (blue) diverges exponentially as $t \to 1$ due to the schedule-induced stiffness. 
    \textbf{(Top)} In the reparameterized $\tau$ domain (mapped back to physical time $t$ for direct comparison in the x-axis), the integration proxy remains bounded and decays near the boundary. 
    The ratio plots (right) show that integration error dominates state drift significantly across most of the trajectory.}
    \label{fig:stiffness_analysis}
\end{figure}

\begin{figure*}[!htbp]
    \centering
    \includegraphics[width=0.49\textwidth]{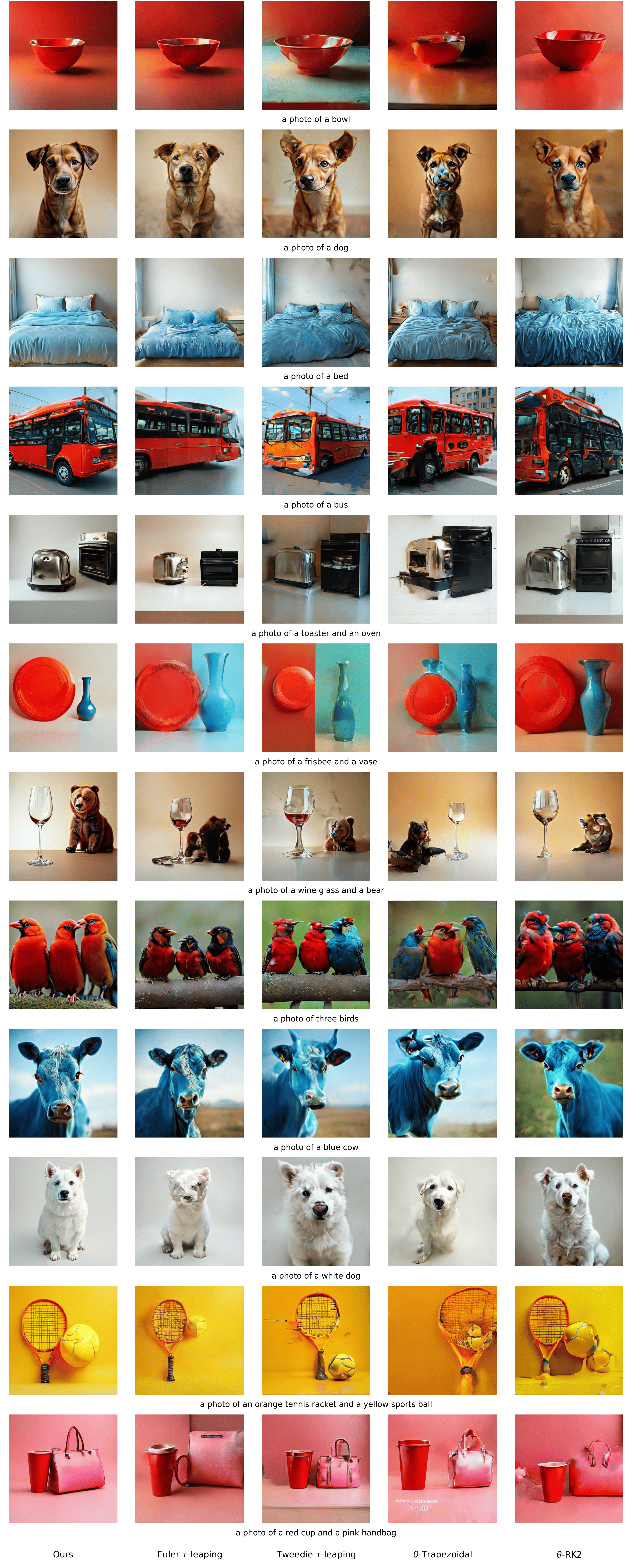}
\caption{\textbf{Qualitative comparison at NFE=8.} We observe that TR-CIE better preserves prompt semantics and reduces low-step artifacts, yielding a better sampling quality. (zoom in for best view)}
    \label{fig:double_column_full_width}
\end{figure*}

\end{document}